
\documentclass{article}
\usepackage{amsmath,amssymb,amsthm}
\usepackage{multirow}
\usepackage{graphicx} 
\usepackage{subfigure} 

\usepackage{natbib}

\usepackage{algorithm}
\usepackage{algorithmic}

\usepackage[dvipdfm]{hyperref}


 \newcommand{\RR}{\mathbb{R}}

 \newcommand{\T}{{}^\top}

 \newcommand{\mC}{\boldsymbol{C}}
 \newcommand{\mE}{\boldsymbol{E}}
 
 \newcommand{\mS}{\boldsymbol{S}}
 
 \newcommand{\mW}{\boldsymbol{W}}
 \newcommand{\mX}{\boldsymbol{X}}
 \newcommand{\mY}{\boldsymbol{Y}}
 \newcommand{\mZ}{\boldsymbol{Z}}

 \newcommand{\mU}{\boldsymbol{U}}
 \newcommand{\mV}{\boldsymbol{V}}
 \newcommand{\mI}{\boldsymbol{I}}

 \newcommand{\mDelta}{\boldsymbol{\Delta}}

 \newcommand{\tA}{\mathcal{A}}
 \newcommand{\tB}{\mathcal{B}}
 \newcommand{\tC}{\mathcal{C}}
 \newcommand{\tD}{\mathcal{D}}
 \newcommand{\tE}{\mathcal{E}}
 \newcommand{\tG}{\mathcal{G}}
 \newcommand{\tX}{\mathcal{X}}
 \newcommand{\tY}{\mathcal{Y}}
 \newcommand{\tW}{\mathcal{W}}

 \newcommand{\tDelta}{\Delta}

\newcommand{\norm}[1]{\bigl|\!\bigl|\!\bigl|#1\bigr|\!\bigr|\!\bigr|}

\newcommand{\tr}{\underline{r}}
\newcommand{\lr}{\overline{r}}

 \def\dot#1#2{\left\langle #1,#2\right\rangle}

 \newcommand{\minimize}{\mathop{\rm minimize}}
 
 \newcommand{\argmin}{\mathop{\rm argmin}}

 \newcommand{\Secref}[1]{Section~{\ref{#1}}}

\newtheorem{lemma}{Lemma}
\newtheorem{theorem}{Theorem}

\newtheorem{corollary}{Corollary}

\usepackage{icml2013} 

\icmltitlerunning{Tensor Decomposition via Structured Schatten Norm Regularization}

\begin{document} 

\twocolumn[
\icmltitle{
Convex Tensor Decomposition via
Structured Schatten Norm Regularization
}

\icmlauthor{Your Name}{email@yourdomain.edu}
\icmladdress{Your Fantastic Institute,
            314159 Pi St., Palo Alto, CA 94306 USA}
\icmlauthor{Your CoAuthor's Name}{email@coauthordomain.edu}
\icmladdress{Their Fantastic Institute,
            27182 Exp St., Toronto, ON M6H 2T1 CANADA}

\icmlkeywords{boring formatting information, machine learning, ICML}

\vskip 0.3in
]

\begin{abstract} 
We discuss structured Schatten norms for tensor
decomposition that includes two recently  proposed
norms (``overlapped'' and ``latent'') for convex-optimization-based tensor decomposition, and connect tensor
decomposition with wider literature on structured sparsity. Based on
the properties of the structured Schatten norms, we mathematically
 analyze the performance of ``latent'' approach for tensor decomposition, which
 was empirically found to perform better than the ``overlapped'' approach
in some settings. We show theoretically that this is indeed
the case. In particular, when the unknown true tensor is low-rank in a
 specific mode, this approach performs as good as knowing the mode
 with the smallest rank. Along the way, we show a novel duality result
 for structures Schatten norms, establish the consistency, and discuss
 the identifiability of this approach. We confirm through
 numerical simulations that our theoretical prediction can precisely
 predict the scaling behaviour of 
 the mean squared error. 
\end{abstract} 
\bibliographystyle{icml2013}

\section{Introduction}
Decomposition of tensors~\cite{KolBad09} (or multi-way arrays) into
low-rank components
arises naturally in many real world data analysis problems. For example,
in neuroimaging, we are often interested in finding spatio-temporal
patterns of neural activities
that are related to certain experimental conditions
or subjects; one way to do this is to compute the decomposition of the
data tensor, which can be of size channels $\times$ time-points $\times$ subjects
$\times$ conditions~\cite{Moe11}. In computer vision, an ensemble of face images
can be collected into a tensor of size pixels $\times$ subjects $\times$
illumination $\times$ viewpoints; the decomposition of this tensor
yields the so called {\em tensorfaces}~\cite{VasTer02}, which can be
regarded as a multi-linear generalization of {\em eigenfaces}~\cite{SirKir87}.

\begin{figure}[tb]
 \begin{center}
  \includegraphics[width=\columnwidth,clip]{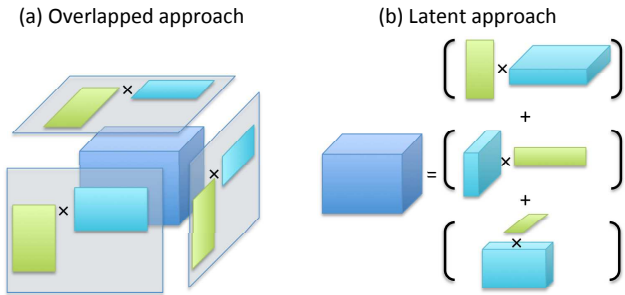}
\caption{Schematic illustrations of the overlapped approach and the latent
  approach for the decomposition of a three way tensor ($K=3$).}
  \label{fig:illust}
 \end{center}
\end{figure}

Conventionally tensor decomposition has been tackled through non-convex
optimization problems, using alternate least squares or higher order
orthogonal iteration~\cite{DeLDeMVan00}. Although being successful in
many application areas, the statistical performance of such approaches has been
widely open. Moreover, the model selection problem can be highly
challenging, especially for the so called Tucker
model~\cite{Tuc66,DeLDeMVan00}, because we 
need to specify the rank $r_k$ for each mode (here a mode refers to one
dimensionality of a tensor); that is, we have $K$ hyper-parameters to choose for a $K$-way
tensor, which is challenging even for $K=3$.  

Recently a convex-optimization-based approach for tensor
decomposition has been proposed by several authors
\cite{SigDeLSuy10,GanRecYam11,LiuMusWonYe09,TomHayKas11}, and
its performance has been analyzed in \cite{TomSuzHayKas11}.

The basic idea behind their convex approach, which we call {\em overlapped
approach}, is to unfold\footnote{For a $K$-way tensor, there are
$K$ ways to unfold a tensor into a matrix. See \Secref{sec:notation}.} a
tensor into matrices along different modes and penalize the unfolded
matrices to be {\em simultaneously low-rank} based on the Schatten
1-norm, which is also known as the trace norm and nuclear
norm~\cite{FazHinBoy01,SreRenJaa05,RecFazPar10}; see the left panel of Figure~\ref{fig:illust}.
The convex approach does not require the rank of the decomposition to be
specified beforehand, and due to the low-rank inducing property of the
Schatten 1-norm, the rank of the decomposition is {\em automatically}
determined.

However, it has been noticed that the above overlapped approach has a
limitation that it performs poorly for a tensor that is only low-rank in
a certain mode \citep{TomHayKas11}. They proposed an alternative approach, which we
call {\em latent approach}, that decomposes a given tensor into a  
a mixture of tensors that each are low-rank in a specific mode;
see the right panel of Figure~\ref{fig:illust}.  
Figure~\ref{fig:overlap_vs_latent} demonstrates that the latent approach
is preferable to the overlapped approach when the underlying tensor is
almost full rank in all but one mode.
\begin{figure}[tb]
 \begin{center}
  \includegraphics[width=\columnwidth,clip]{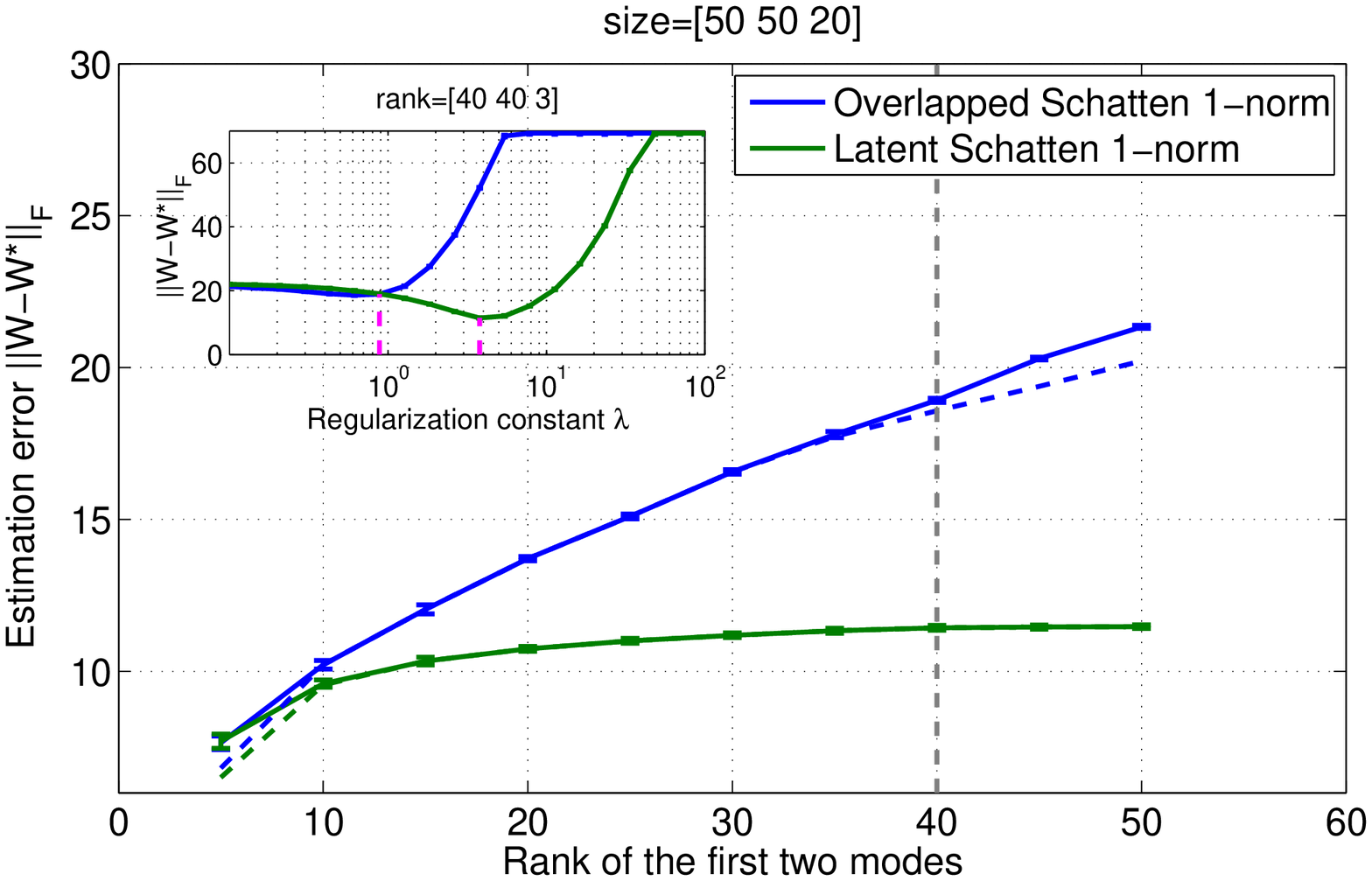}
  \caption{Estimation of a low-rank 50$\times$50$\times$20 tensor of
  rank $r\times r\times 3$ from noisy measurements. The noise standard
  deviation is $\sigma=0.1$. The estimation
  errors of two convex optimization based methods are plotted against
  the rank $r$ of the first two modes. The solid lines show the
error at the fixed regularization constant $\lambda$, which is 0.89
  for the overlapped approach and 3.79 for the latent approach
 (see also Figure~\ref{fig:comparison}).
The dashed lines show the minimum error over candidates
of  the regularization constant $\lambda$  from 0.1 to 100. 
In the inset, the errors of the two approaches are
  plotted against the regularization constant $\lambda$ for rank
 $r=40$ (marked with gray dashed vertical line in the
  outset). The two values (0.89 and 3.79) are marked with vertical dashed lines.
   Note that both approaches need no knowledge of the true rank; the
  rank is automatically learned. }
  \label{fig:overlap_vs_latent}
 \end{center}
\end{figure}


However, there are two issues that are not properly addressed so far.

The first issue is the statistical performance of the latent approach. 
In this paper, we show that the mean squared error of the
latent approach scales no greater than the minimum mode-$k$ rank of the
underlying true tensor, which clearly explains why the latent approach
suffers less than the overlapped approach in Figure~\ref{fig:overlap_vs_latent}.

The second issue is the identifiability of the model underlying the latent
approach, i.e., a mixture of low-rank tensors. In this paper, we show that such
a mixture is identifiable only when the mixture consists of one
component; in other words, when the underlying tensor is low-rank in a
specific mode.


Along the way, we show a novel {\em duality} between the two types of norms employed in
the above two approaches, namely the overlapped Schatten norm and the latent
Schatten norm. This result is closely related and generalize the results
in structured sparsity
literature~\cite{BacJenMaiObo11,JenAudBac11,OboJacVer11,MauPon11}. In
fact, the {\em (plain) overlapped group lasso} constrains the weights to be
simultaneously group sparse over overlapping groups. The {\em latent group lasso} predicts with a mixture of group sparse
weights \citep[see also][]{WriGanRaoPenMa09,JalRavSanRua10,AgaNegWai11}.
These approaches clearly correspond to the two variations of tensor
decomposition algorithms we discussed above.

Finally we empirically compare the overlapped approach and latent approach
and show that even when the unknown tensor is simultaneously low-rank,
which is a favorable situation for the overlapped approach, the latent
approach performs better in many cases. 
Thus we provide both theoretical and empirical
evidence that for noisy tensor decomposition, the latent approach is
preferable to the overlapped approach. 
Our result is  complementary to the
previous study~\cite{TomHayKas11,TomSuzHayKas11}, which mainly focused on
the noise-less tensor completion setting.

This paper is structured as follows. In \Secref{sec:notation}, we
provide basic definitions of the two variations of structured Schatten
norms, namely the overlapped/latent Schatten
norms, and discuss their properties, especially the {\em duality}
between them. \Secref{sec:theory} presents our main theoretical
contributions; we establish the consistency of the latent approach, we
show a denoising performance bound, and discuss the identifiability of
the model underlying it. In \Secref{sec:simulation}, we
empirically confirm the scaling predicted by our theory. Finally,  \Secref{sec:conclusion} concludes
the paper.

\section{Structured Schatten norms for tensors}
\label{sec:notation}
In this section, we define the overlapped Schatten norm and the latent
Schatten norm and discuss their basic properties. 

First we need some basic definitions.

Let $\tW\in\RR^{n_1\times\cdots n_K}$ be a $K$-way tensor. We denote the
total number of entries in $\tW$ by $N=\prod_{k=1}^{K}n_k$. The dot
product between two tensors $\tW$ and $\tX$ is defined as
$\dot{\tW}{\tX}={\rm vec}(\tW)\T{\rm vec}(\tX)$; i.e., the dot product
as vectors in $\RR^{N}$. The Frobenius norm of a tensor is defined as $\norm{\tW}_F=\sqrt{\dot{\tW}{\tW}}$.
Each dimensionality of a tensor is called a {\em mode}. The mode $k$
{\em unfolding} $\mW_{(k)}\in\RR^{n_k\times N/n_k}$ is a matrix that is 
obtained by concatenating the mode-$k$ fibers along columns; here
a mode-$k$ fiber is an $n_k$ dimensional vector obtained by fixing all
the indices but the $k$th index of $\tW$. 
The mode-$k$ rank $\tr_k$ of $\tW$ is the rank of the mode-$k$ unfolding
$\mX_{(k)}$. We say that a tensor $\tW$ has Tucker rank
$(\tr_1,\ldots,\tr_K)$ if the mode-$k$ rank is $\tr_k$ for
$k=1,\ldots,K$~\cite{KolBad09}. 
The mode $k$ folding is the inverse of the unfolding operation.
\subsection{Overlapped Schatten norms}
The low-rank inducing norm studied in
\cite{SigDeLSuy10,GanRecYam11,LiuMusWonYe09,TomHayKas11}, which we call
 overlapped Schatten 1-norm, can be written as follows:
\begin{align}
\label{eq:norm_overlap_S1}
\norm{\tW}_{\underline{S_1/1}}&=\sum\nolimits_{k=1}^{K}\|\mW_{(k)}\|_{S_1}.
\end{align}

In this paper, we consider the following more general {\em overlapped
$S_p/q$-norm}, which includes the Schatten 1-norm as the special case
$(p,q)=(1,1)$. The overlapped $S_p/q$-norm is written as follows:
\begin{align}
\label{eq:norm_overlap}
\norm{\tW}_{\underline{S_p/q}} &=\Bigl(\sum\nolimits_{k=1}^{K}\|\mW_{(k)}\|_{S_p}^q\Bigr)^{1/q},
\end{align}
where $1\leq p,q\leq \infty$; here
$$ \|\mW\|_{S_p}=\Bigl(\sum\nolimits_{j=1}^{r}\sigma_j^p(\mW)\Bigr)^{1/p}$$
is the Schatten $p$-norm for matrices,
where $\sigma_j(\mW)$ is the $j$th largest singular value of $\mW$.

When used as a regularizer, the overlapped Schatten 1-norm
penalizes all modes of $\tW$ to be jointly low-rank. It is related to
the overlapped group regularization \citep[see
][]{JenAudBac11,MaiJenOboBac11} in a
sense that the same object $\tW$ appears repeatedly in the norm.

The following inequality relates the overlapped Schatten 1-norm 
 with the Frobenius norm, which was a
 key step in the analysis of \citet{TomSuzHayKas11}:
 \begin{align}
\label{eq:bound_overlap_fro}
  \norm{\tW}_{\underline{S_1/1}}&\leq \sum_{k=1}^{K}\sqrt{\tr_k}\norm{\tW}_F,
 \end{align}
where  $\tr_k$ is the mode-$k$ rank of $\tW$.

Now we are interested in the dual norm of the overlapped
$S_p/q$-norm, because deriving the dual norm is a key step in solving
the minimization problem that involves the norm~\eqref{eq:norm_overlap}
\citep[see][]{MaiJenOboBac11},
as well as
computing
various complexity measures, such as, Rademacher
complexity~\cite{FoySre11} and Gaussian width~\cite{ChaRecParWil10}. It
turns out that the dual norm of the overlapped $S_p/q$-norm is the  
{\em latent $S_{p^\ast}/q^\ast$-norm} as shown in the following lemma.

\begin{lemma}
\label{lem:dualnorm}
 The dual norm of the overlapped $S_{p}/q$-norm is the latent
 $S_{p^\ast}/q^{\ast}$-norm, where $1/p+1/p^\ast=1$ and
 $1/q+1/q^\ast=1$, which is defined as follows:
\begin{align}
\label{eq:norm_latent}
\norm{\tX}_{\overline{S_{p^\ast}/q^{\ast}}}= \inf_{\left(\tX^{(1)}+\cdots+\tX^{(K)}\right)=\tX}&\left(\sum\nolimits_{k=1}^{K}\|\mX_{(k)}^{(k)}\|_{S_{p^{\ast}}}^{q^\ast}\right)^{1/q^\ast}.
\end{align}
Here the infimum is taken over the $K$-tuple of tensors
 $\tX^{(1)},\ldots,\tX^{(K)}$ that sums to $\tX$.
\end{lemma}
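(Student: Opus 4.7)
The plan is to exploit the structure of the overlapped norm as a mixed norm on tuples of matrices applied to the ``duplication'' map $\tW\mapsto(\mW_{(1)},\ldots,\mW_{(K)})$, and dualize each layer separately. The cleanest route is to compute the dual of the latent norm first and then invoke the bidual theorem. Before starting, I would briefly check that $\norm{\cdot}_{\overline{S_{p^\ast}/q^\ast}}$ is a genuine norm: positive homogeneity and subadditivity follow from scaling each $\tX^{(k)}$ and concatenating two decompositions; finiteness is immediate from the trivial decomposition $\tX^{(1)}=\tX$, $\tX^{(k)}=0$ for $k\ge 2$; and positive definiteness follows because each map $\tZ\mapsto\|\mZ_{(k)}\|_{S_{p^\ast}}$ is a norm on the tensor space (the mode-$k$ unfolding is bijective).

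The main computation is to evaluate $\bigl(\norm{\cdot}_{\overline{S_{p^\ast}/q^\ast}}\bigr)^\ast(\tW)$, which by definition is the supremum of $\langle\tW,\tX\rangle$ over all $\tX$ admitting a decomposition $\sum_k\tX^{(k)}=\tX$ with $\bigl(\sum_k\|\mX^{(k)}_{(k)}\|_{S_{p^\ast}}^{q^\ast}\bigr)^{1/q^\ast}\le 1$. Using $\langle\tW,\tX^{(k)}\rangle=\langle\mW_{(k)},\mX^{(k)}_{(k)}\rangle$, this supremum reads
$$\sup\Bigl\{\sum_{k=1}^{K}\langle\mW_{(k)},\mX^{(k)}_{(k)}\rangle : \bigl(\ssum_k\|\mX^{(k)}_{(k)}\|_{S_{p^\ast}}^{q^\ast}\bigr)^{1/q^\ast}\le 1\Bigr\}.$$
The key observation is that as $\tX^{(k)}$ ranges over all tensors, the matrix $\mX^{(k)}_{(k)}$ ranges over \emph{all} $n_k\times(N/n_k)$ matrices, and there is no joint constraint coupling the $K$-tuple (the summation constraint $\sum_k\tX^{(k)}=\tX$ has already been absorbed into the free variable $\tX$). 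Hence the supremum splits, and becomes the dual of the mixed $(S_{p^\ast},\ell_{q^\ast})$ norm on $K$-tuples of matrices. By the classical duality between Schatten norms and between $\ell_q$ norms, its value equals $\bigl(\sum_k\|\mW_{(k)}\|_{S_p}^q\bigr)^{1/q}=\norm{\tW}_{\underline{S_p/q}}$.

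Finally, since every norm on a finite-dimensional space equals its bidual, the dual of $\norm{\cdot}_{\underline{S_p/q}}$ must be $\norm{\cdot}_{\overline{S_{p^\ast}/q^\ast}}$, which is exactly the claim of the lemma.

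The main obstacle is justifying the decoupling in the sup above: the decomposition variables $\tX^{(k)}$ appear to be linked through the constraint $\sum_k\tX^{(k)}=\tX$, but once we rewrite $\langle\tW,\tX\rangle=\sum_k\langle\mW_{(k)},\mX^{(k)}_{(k)}\rangle$ with $\tX$ absorbed, each summand depends only on its own $\mX^{(k)}_{(k)}$, and bijectivity of the unfolding lets us treat these $K$ matrices as free variables. Everything else is bookkeeping with H\"older's inequality for Schatten norms and for sequences.
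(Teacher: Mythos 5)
Your proposal is correct, but it takes a genuinely different route from the paper. The paper computes the dual of the overlapped norm \emph{directly} via Lagrangian duality: it rewrites $\sup\,\langle\tW,\tX\rangle$ subject to $\norm{\tW}_{\underline{S_p/q}}\leq 1$ with auxiliary variables $\mZ_k=\mW_{(k)}$, forms a Lagrangian with multipliers $\tilde{\tY}^{(k)}$ and $\gamma\geq 0$, evaluates the conjugate $\sup_{\mZ}\bigl(\langle\mX,\mZ\rangle-\tfrac{\gamma}{q}\|\mZ\|_{S_p}^q\bigr)$ in closed form, and finally minimizes over $\gamma$ to make the latent norm appear as the dual problem. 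You instead compute the dual of the \emph{latent} norm, which is the easy direction: the latent norm is the quotient (infimal) norm induced by the summation map on $K$-tuples of tensors equipped with the mixed $\ell_{q^\ast}(S_{p^\ast})$ norm, so its dual at $\tW$ decouples into the dual mixed norm evaluated at the duplicated tuple $(\mW_{(1)},\ldots,\mW_{(K)})$, i.e.\ the overlapped $S_p/q$-norm, and then the finite-dimensional bidual theorem converts this into the statement of the lemma. Your route avoids the Lagrangian machinery, the conjugate computation, and the optimization over $\gamma$, at the cost of having to verify that the latent expression is a genuine norm (needed before invoking the bidual theorem), which you do; the two details worth spelling out are positive definiteness --- e.g.\ via attainment of the infimum in finite dimensions, or via $\|\mZ_{(k)}\|_{S_{p^\ast}}\geq c\norm{\tZ}_F$ together with the triangle inequality --- and the small scaling/limiting argument showing that the supremum over the latent unit ball equals the supremum over tuples of mixed norm at most one (the latent norm being defined as an infimum). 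Both arguments cover the endpoint cases $p,q\in\{1,\infty\}$ equally well through H\"older duality for Schatten and $\ell_q$ norms, so the generality of the lemma is preserved.
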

\begin{proof}
 The proof is presented in Appendix~\ref{sec:proof_lemma_dualnorm}.
\end{proof}

The duality in the above lemma naturally generalizes the
duality between overlapped/latent group sparsity norms that have only
partial overlap (in contrast to the complete overlap here). Although
being recognized in special instances
\cite{JalRavSanRua10,OboJacVer11,MauPon11,AgaNegWai11}, to the best of our knowledge,
this duality has not been presented in the generality of Lemma~\ref{lem:dualnorm}.
Note that when the groups have no overlap, the overlapped/latent
group sparsity norms become identical, and the duality is the ordinary
duality between the group $S_p/q$-norms and the group $S_{p^\ast}/q^{\ast}$-norms.

\subsection{Latent Schatten norms}
The latent approach for tensor decomposition
 proposed by \citet{TomHayKas11} solves the following minimization problem
\begin{align}
\label{eq:opt_latent}
 \minimize_{\tW^{(1)},\ldots,\tW^{(K)}}\quad & L(\tW^{(1)}+\cdots +\tW^{(K)})+\lambda\sum_{k=1}^{K}\|\mW_{(k)}^{(k)}\|_{S_1},
\end{align}
where $L$ is a loss function, $\lambda$ is a regularization
constant, and $\mW^{(k)}_{(k)}$ is the mode-$k$ unfolding of $\tW^{(k)}$.
 Intuitively speaking, the latent approach for tensor decomposition predicts with a mixture of $K$ tensors that each
are regularized to be low-rank in a specific mode.

Now, since the loss term in the minimization
 problem~\eqref{eq:opt_latent} only depends on the sum of the tensors
 $\tW^{(1)},\ldots,\tW^{(K)}$, minimization
 problem~\eqref{eq:opt_latent} is equivalent to the following minimization
 problem
\begin{align*}
 \minimize_{\tW}\quad & L(\tW)+\lambda\norm{\tW}_{\overline{S_1/1}}.
\end{align*}
In other words, we have identified the structured Schatten
norm employed in the latent approach as the latent $S_1/1$-norm
(or latent Schatten 1-norm for short), which can be written as follows:
\begin{align}
 \label{eq:norm_latent_S1}
\norm{\tW}_{\overline{S_1/1}}=\inf_{\left(\tW^{(1)}+\cdots+\tW^{(K)}\right)=\tW}\sum_{k=1}^{K}\|\mW_{(k)}^{(k)}\|_{S_1}.
\end{align}
According to Lemma~\ref{lem:dualnorm}, the dual norm of the latent $S_1/1$-norm
is the overlapped $S_\infty/\infty$-norm
\begin{align}
 \label{eq:norm_overlap_Sinf}
\norm{\tX}_{\underline{S_\infty/\infty}}=\max_k\|\mX_{(k)}\|_{S_\infty},
\end{align}
where $\|\cdot\|_{S_\infty}$ is the spectral norm. 

The following lemma is similar to
inequality~\eqref{eq:bound_overlap_fro} and is a key in our analysis.
\begin{lemma}
\label{lem:latent_bound}
\begin{align*}
 \norm{\tW}_{\overline{S_1/1}}&\leq \left(\min_k \sqrt{\tr_k}\right)\norm{\tW}_F,
\end{align*} 
where $\tr_k$ is the mode-$k$ rank of $\tW$.
\end{lemma}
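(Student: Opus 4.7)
The plan is to bound the infimum in the definition of $\norm{\cdot}_{\overline{S_1/1}}$ by exhibiting a single, particularly simple decomposition of $\tW$, and then apply the standard Cauchy--Schwarz bound relating the Schatten $1$-norm to the Frobenius norm of a rank-constrained matrix.

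Concretely, let $k^{\ast} \in \argmin_{k} \tr_k$ be a mode achieving the smallest mode-$k$ rank. I would take the trivial decomposition $\tW^{(k^{\ast})} = \tW$ and $\tW^{(k)} = 0$ for $k \neq k^{\ast}$, which is admissible in the infimum~\eqref{eq:norm_latent_S1}. Plugging into the definition immediately gives
\begin{align*}
\norm{\tW}_{\overline{S_1/1}} \;\leq\; \|\mW_{(k^{\ast})}\|_{S_1}.
\end{align*}

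The next step is the standard matrix inequality $\|\mM\|_{S_1} \leq \sqrt{\mathrm{rank}(\mM)}\,\|\mM\|_F$, which follows from Cauchy--Schwarz applied to the nonzero singular values: $\sum_{j=1}^{r}\sigma_j(\mM) \leq \sqrt{r}\bigl(\sum_{j=1}^{r}\sigma_j(\mM)^2\bigr)^{1/2}$. Applied to $\mM = \mW_{(k^{\ast})}$, whose rank is $\tr_{k^{\ast}}$ by definition of the mode-$k$ rank, this gives $\|\mW_{(k^{\ast})}\|_{S_1} \leq \sqrt{\tr_{k^{\ast}}}\,\|\mW_{(k^{\ast})}\|_F$. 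Finally, unfolding is just a reshaping of entries, so $\|\mW_{(k^{\ast})}\|_F = \norm{\tW}_F$, and combining yields
\begin{align*}
\norm{\tW}_{\overline{S_1/1}} \;\leq\; \sqrt{\tr_{k^{\ast}}}\,\norm{\tW}_F \;=\; \bigl(\min_k \sqrt{\tr_k}\bigr)\norm{\tW}_F.
\end{align*}

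There is essentially no obstacle here: the result is a one-line consequence of (i) the fact that the infimum over decompositions is upper bounded by any particular decomposition, (ii) a classical Schatten-$1$ vs.\ Frobenius inequality, and (iii) the isometry between a tensor and any of its unfoldings in Frobenius norm. The only point worth emphasizing is the contrast with~\eqref{eq:bound_overlap_fro}: because the latent norm is an infimum rather than a sum, we are free to concentrate all the mass on the single best mode, which is exactly why the factor $\sum_k \sqrt{\tr_k}$ for the overlapped norm gets replaced by $\min_k \sqrt{\tr_k}$ for the latent norm.
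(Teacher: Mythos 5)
Your proof is correct and follows essentially the same route as the paper's: both use the singleton decomposition that puts all of $\tW$ on the minimizing mode and then apply the rank-based Cauchy--Schwarz bound $\|\mM\|_{S_1}\leq\sqrt{\mathrm{rank}(\mM)}\,\|\mM\|_F$ together with the fact that unfolding preserves the Frobenius norm. No gaps; the only cosmetic difference is that the paper bounds for every $k$ and then minimizes, while you fix $k^{\ast}$ at the outset.
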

\begin{proof}
Since we are allowed to take a singleton decomposition $\tW^{(k)}=\tW$
and $\tW^{(k')}=0$ $(k'\neq k)$, we have
\begin{align*}
\norm{\tW}_{\overline{S_1/1}}&=\inf_{\left(\tW^{(1)}+\cdots+\tW^{(K)}\right)=\tW}\sum_{k=1}^{K}\|\mW_{(k)}^{(k)}\|_{S_1}\\
&\leq \|\mW_{(k)}\|_{S_1}\\
&\leq \sqrt{\tr}_k\|\mW_{(k)}\| \quad(\forall k=1,\ldots,K)
\end{align*}
Choosing $k$ that minimizes the right hand side, we obtain our claim.
\end{proof}
Compared to inequality~\eqref{eq:bound_overlap_fro}, the latent Schatten
1-norm is bounded by the {\em minimal} square root of the ranks instead of the
sum. This is the fundamental reason why the latent approach performs
betters than the overlapped approach as in Figure~\ref{fig:overlap_vs_latent}.

%

\section{Main theoretical results}
\label{sec:theory}
In this section, we study the consistency, generalization performance,
and identifiability of the latent approach for 
tensor decomposition in the context of 
recovering an unknown tensor $\tW^{\ast}$ from noisy measurements.
This is the setting of the
experiment in Figure~\ref{fig:overlap_vs_latent}. 

First, we show that the latent approach is consistent. That is, the error
goes to zero when the noise goes to zero, which corresponds to the
situation when the entries are repeatedly observed.

Second, combining the duality we presented
in the previous section with the techniques from \citet{AgaNegWai11}, we
analyze the denoising performance of the latent approach in the context of
recovering an unknown tensor $\tW^{\ast}$ from noisy measurements.
This is the setting of the
experiment in Figure~\ref{fig:overlap_vs_latent}. 
We first prove a
deterministic inequality that holds under certain condition on the
regularization constant. Next, we assume Gaussian noise and derive
an inequality that holds with high probability under an appropriate
scaling of the regularization constant.

Third, we discuss the
difference between overlapped approach and latent approach and provide
an explanation for the empirically observed superior performance of the
latent approach in Figure~\ref{fig:overlap_vs_latent}.

Finally we discuss the condition under which the decomposition
$\tW=\sum_{k=1}^{K}\tW^{(k)}$ is identifiable and show  that the model
is (locally) identifiable only when the mixture consists of one component. 


\subsection{Consistency}
\label{sec:consistency}
Let $\tW^{\ast}$ be the underlying true tensor and the noisy version
$\tY$ is obtained as follows:
\begin{align*}
\tY &= \tW^{\ast} + \tE,
\end{align*}
where $\tE\in\RR^{n_1\times \cdots\times n_K}$ is the noise tensor.

First we establish the consistency of the latent approach.
\begin{theorem}
 \label{thm:consistency}
The estimator defined by
\begin{align}
\label{eq:opt_basic}
 \hat{\tW}=\argmin_{\tW}\Biggl(
\frac{1}{2}&\norm{\tY- \tW}_F^2+\lambda\norm{\tW}_{\overline{S_1/1}}\Biggr),
\end{align}
is consistent. That is, when the noise goes to zero (e.g., when the
 entries are repeatedly observed),
$\hat{\tW}\rightarrow \tW^{\ast}$ for any sequence $\lambda\rightarrow 0$.
\end{theorem}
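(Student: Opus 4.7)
\vspace{1em}

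The plan is to run the standard basic inequality argument for regularized least squares, using only that the latent norm is a finite non-negative quantity at $\tW^{\ast}$. Since the loss is a simple quadratic in $\tW$, there is no need to invoke the duality results of Lemma~\ref{lem:dualnorm} or the sharp bound of Lemma~\ref{lem:latent_bound} for consistency; they will be reserved for the stronger denoising bound later.

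First I would use optimality of $\hat{\tW}$ in~\eqref{eq:opt_basic}: comparing its objective value with the one at the true tensor $\tW^{\ast}$ gives
\begin{align*}
\tfrac{1}{2}\norm{\tY-\hat{\tW}}_F^2 + \lambda\norm{\hat{\tW}}_{\overline{S_1/1}}
\;\leq\; \tfrac{1}{2}\norm{\tY-\tW^{\ast}}_F^2 + \lambda\norm{\tW^{\ast}}_{\overline{S_1/1}}.
\end{align*}
Substituting $\tY-\tW^{\ast}=\tE$ and dropping the non-negative term $\lambda\norm{\hat{\tW}}_{\overline{S_1/1}}$ on the left yields
\begin{align*}
\tfrac{1}{2}\norm{\tY-\hat{\tW}}_F^2 \;\leq\; \tfrac{1}{2}\norm{\tE}_F^2 + \lambda\norm{\tW^{\ast}}_{\overline{S_1/1}}.
\end{align*}

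Second, I would convert this bound on $\norm{\tY-\hat{\tW}}_F$ into one on $\norm{\hat{\tW}-\tW^{\ast}}_F$ by the triangle inequality:
\begin{align*}
\norm{\hat{\tW}-\tW^{\ast}}_F \;\leq\; \norm{\hat{\tW}-\tY}_F + \norm{\tE}_F
\;\leq\; \sqrt{\norm{\tE}_F^2 + 2\lambda\norm{\tW^{\ast}}_{\overline{S_1/1}}} + \norm{\tE}_F.
\end{align*}
To conclude, I note that $\norm{\tW^{\ast}}_{\overline{S_1/1}}$ is a fixed finite quantity: by taking the singleton decomposition used in the proof of Lemma~\ref{lem:latent_bound}, it is bounded by $\|\mW^{\ast}_{(k)}\|_{S_1}$ for any mode $k$, hence finite for any finite tensor $\tW^{\ast}$. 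Therefore as $\tE\to 0$ and $\lambda\to 0$ the right-hand side tends to zero and $\hat{\tW}\to\tW^{\ast}$.

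There is essentially no hard step here; the argument is a textbook basic-inequality proof once the latent norm is recognized as a well-defined non-negative seminorm via the identification in~\eqref{eq:norm_latent_S1}. The only thing to be careful about is that no rate on $\lambda$ is asserted — the statement is purely qualitative — so I would not attempt to track a balance between $\norm{\tE}_F$ and $\lambda$; the two limits are taken independently (or jointly, with both tending to zero), and the bound above handles both cases uniformly.
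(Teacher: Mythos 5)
Your proof is correct, but it takes a genuinely different route from the paper's. You run the basic-inequality argument: compare objective values at $\hat{\tW}$ and $\tW^{\ast}$, drop the nonnegative penalty term, and combine with the triangle inequality to get $\norm{\hat{\tW}-\tW^{\ast}}_F \leq \sqrt{\norm{\tE}_F^2 + 2\lambda\norm{\tW^{\ast}}_{\overline{S_1/1}}} + \norm{\tE}_F$, which needs only that $\norm{\tW^{\ast}}_{\overline{S_1/1}}$ is finite (immediate from a singleton decomposition). The paper instead uses the first-order optimality condition $\tY-\hat{\tW}\in\lambda\,\partial\norm{\hat{\tW}}_{\overline{S_1/1}}$ together with the duality of Lemma~\ref{lem:dualnorm}: every subgradient has overlapped $S_\infty/\infty$-norm at most one, so $\norm{\hat{\tW}-\tY}_{\underline{S_\infty/\infty}}\leq\lambda$, and finite-dimensional norm equivalence gives $\norm{\hat{\tW}-\tY}_F\leq C\lambda$ independently of the noise; adding $\norm{\tE}_F$ finishes the argument. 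What each approach buys: yours is more elementary and self-contained (no duality, no norm-equivalence constant), and it exposes explicitly how the error depends on $\norm{\tW^{\ast}}_{\overline{S_1/1}}$; the paper's version decouples the residual from the noise entirely ($\norm{\hat{\tW}-\tY}_F\leq C\lambda$ whatever $\tE$ is) and showcases the overlapped/latent duality that is central to the rest of the paper. Both arguments establish the purely qualitative claim, since each bound vanishes as $\tE\rightarrow 0$ and $\lambda\rightarrow 0$ with no rate condition linking the two.
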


\begin{proof}
Due to the triangular inequality 
\begin{align*}
 \norm{\hat{\tW}-\tW^{\ast}}_F&\leq\norm{\hat{\tW}-\tY}_F+\norm{\tY-\tW^{\ast}}_F.
\end{align*}
Here the second term goes to zero as the noise shrinks. Next, from the
optimality of $\hat{\tW}$, the first term satisfies
\begin{align*}
 \tY - \hat{\tW}&\in\lambda\partial\norm{\hat{\tW}}_{\overline{S_1/1}},
\end{align*}
where $\partial\norm{\hat{\tW}}_{\overline{S_1/1}}$ is the
 subdifferential of the latent $S_1/1$ norm at $\hat{\tW}$. 
Now since the dual norm of
the latent $S_1/1$ norm is the overlapped $S_\infty/\infty$ norm, for
 any $\tG\in\partial\norm{\hat{\tW}}_{\overline{S_1/1}}$, we
have 
 $\norm{\tG}_{\underline{S_\infty/\infty}}\leq 1$, and therefore
\begin{align*}
 \norm{\hat{\tW}-\tY}_F&\leq C\norm{\hat{\tW}-\tY}_{\underline{S_\infty/\infty}}\leq C\lambda,
\end{align*}
where $C$ is a constant that is independent of $\lambda$. Therefore, for
any sequence $\lambda\rightarrow 0$, we have $\hat{\tW}\rightarrow
\tW^{\ast}$ when $\tE\rightarrow 0$.
\end{proof}

\subsection{Deterministic bound}
\label{sec:detbound}
The consistency statement in the previous section only deals with the
sum $\hat{\tW}=\sum_{k=1}^{K}\hat{\tW}^{(k)}$ and its convergence to the
truth $\tW^{\ast}$ in the limit the noise goes to zero. In this section, we
establish a stronger statement that shows the behavior of individual
terms $\hat{\tW}^{(k)}$ and also the denoising performance.

To this end we need some additional assumptions.

First, we assume that the unknown tensor $\tW^{\ast}$ is a mixture of $K$
tensors that each are low-rank in a certain mode and we have
a noisy observation $\tY$  as follows:
\begin{align}
\label{eq:noise_model}
\tY=\tW^{\ast}+\tE=\sum\nolimits_{k=1}^{K}\tW^{\ast(k)}+\tE,
\end{align}
where $\bar{r}_k={\rm rank}(\mW^{(k)}_{(k)})$ is the
mode-$k$ rank of the $k$th component $\tW^{\ast(k)}$.

Second, we assume that the spectral norm of the mode-$k$ unfolding of the $l$th
component is bounded by a constant $\alpha$ for all $k\neq l$ as follows:
\begin{align}
 \label{eq:assumption_Sinf}
\|\mW_{(k)}^{\ast(l)}\|_{S_\infty}&\leq \alpha \quad (\forall l\neq k, k,l=1,\ldots,K).
\end{align}
Note that such an additional incoherence assumption has also been used in
\cite{CanLiMaWri09,WriGanRaoPenMa09,AgaNegWai11,HsuKakZha11}. 

We employ the following optimization problem to recover the unknown
tensor $\tW^{\ast}$:
\begin{align}
\label{eq:opt}
 \hat{\tW}=\argmin_{\tW}\Biggl(
\frac{1}{2}&\norm{\tY- \tW}_F^2+\lambda\norm{\tW}_{\overline{S_1/1}}\notag\\
&\quad {\rm  s.t.}\quad \|\mW^{(k)}_{(l)}\|_{S_\infty}\leq \alpha, \quad
 \forall l\neq k\Biggr),
\end{align}
where $\tW=\sum_{k=1}^{K}\tW^{(k)}$ denotes the
optimal decomposition induced by the latent Schatten 1-norm~\eqref{eq:norm_latent_S1}; $\lambda>0$ is a regularization
constant. Notice that we have introduced additional spectral norm
constraints to control the correlation between the components
 \citep[see also][]{AgaNegWai11}. 

Our first bound can be stated as follows:
\begin{theorem}
\label{thm:deterministic}
Let $\hat{\tW}^{(k)}$ be an optimal decomposition of $\hat{\tW}$ induced by the latent
 Schatten 1-norm~\eqref{eq:norm_latent_S1}.
Assume that the regularization constant $\lambda$ satisfies
 $\lambda\geq 2\norm{\tE}_{\underline{S_\infty/\infty}}+\alpha(K-1)$. Then
there is a universal constant $c$ such that, any solution $\hat{\tW}$ of the
 minimization problem~\eqref{eq:opt} satisfies the
 following deterministic bound:
\begin{align}
\label{eq:thm_det_statement}
 \sum_{k=1}^{K}\norm{\hat{\tW}^{(k)}-\tW^{\ast(k)}}_F^2 &\leq c\lambda^2\sum_{k=1}^{K}\lr_k.
\end{align}
\end{theorem}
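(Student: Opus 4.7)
The plan is to start from a basic inequality obtained by comparing the value of the objective in \eqref{eq:opt} at the optimal decomposition $\{\hat{\tW}^{(k)}\}$ with its value at the ground-truth decomposition $\{\tW^{\ast(k)}\}$, which is feasible by the incoherence assumption \eqref{eq:assumption_Sinf}. Writing $\mDelta^{(k)}=\hat{\tW}^{(k)}-\tW^{\ast(k)}$ and cancelling the common $\tfrac{1}{2}\norm{\tE}_F^2$ term yields
\begin{align*}
\tfrac{1}{2}\norm{\ssum_k\mDelta^{(k)}}_F^2\leq \bigl\langle\tE,\ssum_k\mDelta^{(k)}\bigr\rangle+\lambda\ssum_k\bigl(\|\mW^{\ast(k)}_{(k)}\|_{S_1}-\|\hat{\mW}^{(k)}_{(k)}\|_{S_1}\bigr).
\end{align*}
I would then treat the noise term mode-by-mode by matrix H\"older, $|\langle\tE,\mDelta^{(k)}\rangle|=|\langle\mE_{(k)},\mDelta^{(k)}_{(k)}\rangle|\leq\|\mE_{(k)}\|_{S_\infty}\|\mDelta^{(k)}_{(k)}\|_{S_1}\leq\norm{\tE}_{\underline{S_\infty/\infty}}\|\mDelta^{(k)}_{(k)}\|_{S_1}$, and control the penalty difference by decomposability: splitting $\mDelta^{(k)}_{(k)}=\mDelta^{(k)\prime}_{(k)}+\mDelta^{(k)\prime\prime}_{(k)}$ into its projection onto the tangent space of $\mW^{\ast(k)}_{(k)}$ (of rank at most $2\bar{r}_k$) and its orthogonal complement, triangle inequality together with the nuclear norm identity $\|\mW^{\ast(k)}_{(k)}+\mDelta^{(k)\prime\prime}_{(k)}\|_{S_1}=\|\mW^{\ast(k)}_{(k)}\|_{S_1}+\|\mDelta^{(k)\prime\prime}_{(k)}\|_{S_1}$ yield $\|\mW^{\ast(k)}_{(k)}\|_{S_1}-\|\hat{\mW}^{(k)}_{(k)}\|_{S_1}\leq\|\mDelta^{(k)\prime}_{(k)}\|_{S_1}-\|\mDelta^{(k)\prime\prime}_{(k)}\|_{S_1}$.

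The main obstacle is that the left-hand side above controls $\norm{\sum_k\mDelta^{(k)}}_F^2$, whereas the theorem asks for $\sum_k\norm{\mDelta^{(k)}}_F^2$; closing this gap is exactly where the spectral-norm constraints and the $\alpha(K-1)$ contribution to the lower bound on $\lambda$ come in. For $k\neq l$, both $\hat{\tW}^{(l)}$ and $\tW^{\ast(l)}$ satisfy $\|\cdot_{(k)}\|_{S_\infty}\leq\alpha$, so $\|\mDelta^{(l)}_{(k)}\|_{S_\infty}\leq 2\alpha$, and another H\"older application gives $|\langle\mDelta^{(k)},\mDelta^{(l)}\rangle|\leq 2\alpha\|\mDelta^{(k)}_{(k)}\|_{S_1}$ (and also $\leq 2\alpha\|\mDelta^{(l)}_{(l)}\|_{S_1}$). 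Symmetrising and summing over the $K(K-1)$ ordered off-diagonal pairs yields
\begin{align*}
\ssum_k\norm{\mDelta^{(k)}}_F^2 \leq \norm{\ssum_k\mDelta^{(k)}}_F^2 + 2\alpha(K-1)\ssum_k\|\mDelta^{(k)}_{(k)}\|_{S_1},
\end{align*}
so combining with the three bounds above produces, after collecting the $\|\mDelta^{(k)\prime}_{(k)}\|_{S_1}$ and $\|\mDelta^{(k)\prime\prime}_{(k)}\|_{S_1}$ terms, an inequality whose coefficients are $\norm{\tE}_{\underline{S_\infty/\infty}}+\alpha(K-1)+\lambda$ and $\norm{\tE}_{\underline{S_\infty/\infty}}+\alpha(K-1)-\lambda$ respectively.

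The hypothesis $\lambda\geq 2\norm{\tE}_{\underline{S_\infty/\infty}}+\alpha(K-1)$ is then used exactly to drive the coefficient on the orthogonal part non-positive and to bound the coefficient on the tangent part by at most $2\lambda$. Discarding the non-positive contribution and applying the rank bound $\|\mDelta^{(k)\prime}_{(k)}\|_{S_1}\leq\sqrt{2\bar{r}_k}\,\|\mDelta^{(k)\prime}_{(k)}\|_F\leq\sqrt{2\bar{r}_k}\,\norm{\mDelta^{(k)}}_F$ together with Cauchy--Schwarz in $k$ turns the resulting bound into a quadratic inequality in $\sqrt{\sum_k\norm{\mDelta^{(k)}}_F^2}$, which solves to the stated $c\lambda^2\sum_k\bar{r}_k$ bound for a universal constant $c$.
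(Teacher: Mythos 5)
Your proposal is correct and follows essentially the same route as the paper's proof: a basic inequality from the optimality of $\{\hat{\tW}^{(k)}\}$ against the feasible truth $\{\tW^{\ast(k)}\}$, H\"older's inequality with the dual norm $\norm{\tE}_{\underline{S_\infty/\infty}}$, the rank-$2\bar{r}_k$ decomposability argument at $\mW^{\ast(k)}_{(k)}$, the spectral-constraint control of the cross terms (your inequality relating $\sum_k\norm{\mDelta^{(k)}}_F^2$ to $\norm{\sum_k\mDelta^{(k)}}_F^2$ is exactly Lemma~\ref{lem:lowerbound}), and Cauchy--Schwarz to close the quadratic inequality in $\sqrt{\sum_k\norm{\mDelta^{(k)}}_F^2}$. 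The only deviation is bookkeeping: you retain the sharper decomposability difference $\|\mDelta_k'\|_{S_1}-\|\mDelta_k''\|_{S_1}$ and discard the non-positive coefficient on the orthogonal part, whereas the paper first applies the triangle inequality and then invokes the cone condition of Lemma~\ref{lem:decompose}; both yield the stated bound, yours with a slightly smaller universal constant.
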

\begin{proof}
 The proof is presented in Appendix~\ref{sec:proof_deterministic}.
\end{proof}

We can also obtain a bound on the difference of the whole tensor
$\hat{\tW}-\tW^{\ast}$ rather than the squared sum differences as in
Theorem~\ref{thm:deterministic} as follows.
\begin{corollary}
Under the same conditions as in Theorem~\ref{thm:deterministic}  we have
\begin{align}
\label{eq:bound_main}
 \norm{\hat{\tW}-\tW^{\ast}}_F^2 &\leq cK\lambda^2\sum_{k=1}^{K}\bar{r}_k.
\end{align}
\end{corollary}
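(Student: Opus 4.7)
The plan is straightforward: reduce the whole-tensor error to the sum of per-component errors that is already controlled by Theorem~\ref{thm:deterministic}. Since, by construction, $\hat{\tW}=\sum_{k=1}^{K}\hat{\tW}^{(k)}$ and, by the modeling assumption~\eqref{eq:noise_model}, $\tW^{\ast}=\sum_{k=1}^{K}\tW^{\ast(k)}$, I would first apply the triangle inequality in the Frobenius norm to obtain
\[
\norm{\hat{\tW}-\tW^{\ast}}_F = \norm{\sum_{k=1}^{K}\left(\hat{\tW}^{(k)}-\tW^{\ast(k)}\right)}_F \leq \sum_{k=1}^{K}\norm{\hat{\tW}^{(k)}-\tW^{\ast(k)}}_F.
\]

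The next step is to convert this $\ell_1$-sum of Frobenius norms into an $\ell_2$-sum, so that Theorem~\ref{thm:deterministic} can be plugged in. I would apply the Cauchy--Schwarz inequality by pairing each summand with the constant $1$:
\[
\sum_{k=1}^{K}\norm{\hat{\tW}^{(k)}-\tW^{\ast(k)}}_F \leq \sqrt{K}\,\sqrt{\sum_{k=1}^{K}\norm{\hat{\tW}^{(k)}-\tW^{\ast(k)}}_F^{2}}.
\]
Squaring both sides and then inserting the bound~\eqref{eq:thm_det_statement} from Theorem~\ref{thm:deterministic} immediately yields
\[
\norm{\hat{\tW}-\tW^{\ast}}_F^{2} \leq K\sum_{k=1}^{K}\norm{\hat{\tW}^{(k)}-\tW^{\ast(k)}}_F^{2} \leq cK\lambda^{2}\sum_{k=1}^{K}\bar{r}_k,
\]
which is the claimed bound~\eqref{eq:bound_main}.

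There is no genuine obstacle here, since the corollary is a purely algebraic consequence of Theorem~\ref{thm:deterministic} combined with the elementary inequality $\bigl(\sum_{k} a_k\bigr)^{2}\leq K\sum_{k} a_k^{2}$ for nonnegative $a_k$. The only point worth noting is that the factor $K$ is an artifact of passing from component-wise to aggregate error via triangle inequality, and it is unavoidable without further structural assumptions on how the components $\hat{\tW}^{(k)}-\tW^{\ast(k)}$ are correlated (e.g., orthogonality, which would save the factor $K$).
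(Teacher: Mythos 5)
Your proposal is correct and follows essentially the same route as the paper: triangle inequality on the decomposition $\hat{\tW}-\tW^{\ast}=\sum_{k=1}^{K}(\hat{\tW}^{(k)}-\tW^{\ast(k)})$, then Cauchy--Schwarz to get the factor $\sqrt{K}$, then squaring and inserting the bound of Theorem~\ref{thm:deterministic}. No gaps.
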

\begin{proof}
 Using the triangular inequality and Cauchy-Schwarz inequality we have
 $\norm{\hat{\tW}-\tW^{\ast}}_F\leq\sum_{k=1}^{K}\norm{\hat{\tW}^{(k)}-\tW^{\ast(k)}}_F\leq\sqrt{K}\sqrt{\sum_{k=1}^{K}\norm{\hat{\tW}^{(k)}-\tW^{\ast(k)}}_F^2}$.
\end{proof}

Since we are bounding the overall error in \eqref{eq:bound_main}, we may
exploit the arbitrariness of the decomposition $\tW^{\ast}=\sum_{k=1}^{K}\tW^{\ast(k)}$
to obtain a tight bound. The tightest bound is obtained when we choose
the decomposition that 
minimizes the {\em sum of the ranks} $\sum_{k=1}^{K}\bar{r}_k$. We say
$\tW^{\ast}$ has the {\em latent rank} $(\lr_1,\ldots,\lr_K)$ for such a
minimal decomposition in terms of the sum.

A simple upper bound  is obtained by choosing a decomposition
$\tW^{\ast(k)}=\tW^{\ast}$ and $\tW^{\ast(k')}=0$ for $k'\neq k$.
In particular by choosing the mode with the minimum mode-$k$ rank, we obtain
\begin{align*}
\norm{\hat{\tW}-\tW^{\ast}}_F^2 \leq c K\lambda^2\min_{k=1,\ldots,K}\tr_k,
\end{align*}
where $\tr_k$ is the mode-$k$ rank of $\tW^{\ast}$. We refer to the above
decomposition as the {\em minimum rank singleton decomposition}.

Note that the right-hand side of our bound
\eqref{eq:thm_det_statement} does not necessarily go to zero 
when the noise $\tE$ goes to zero, because
$\lambda\geq\alpha(K-1)$. When the noise goes to zero,
$\hat{\tW}\rightarrow \tW^{\ast}$ can be obtained by any decreasing
sequence  $\lambda\rightarrow 0$ as shown in the previous subsection.
Therefore our bound is most useful when the noise is relatively
large and the first term $2\norm{\tE}_{\underline{S_\infty}/\infty}$
dominates the second term $\alpha(K-1)$ in the condition for the
regularization constant $\lambda$.

\subsection{Gaussian noise}
When the elements of the noise tensor $\tE$ are Gaussian, we obtain
the following theorem.
\begin{theorem}
\label{thm:gaussian}
Assume that the elements of the noise tensor $\tE$ are independent
Gaussian random variables with variance $\sigma^2$. In addition,
 assume without loss of generality that the dimensionalities of
 $\tW^{\ast}$ are sorted in the descending order, i.e.,
 $n_1\geq\cdots\geq n_K$.
Then there are
 universal constants $c_0,c_1$ such that, with high probability, any solution of the
 minimization problem~\eqref{eq:opt} with regularization constant
 $\lambda=c_0\sigma(\sqrt{N/n_K}+\sqrt{n_1}+\sqrt{\log K})+\alpha(K-1)$ satisfies
 the following bound:
\begin{align}
\label{eq:bound_gaussian}
\frac{1}{N} \sum_{k=1}^{K}\norm{\hat{\tW}^{(k)}-\tW^{\ast(k)}}_F^2&\leq
c_1 F\sigma^2\frac{\sum_{k=1}^{K}\bar{r}_k}{n_K},
\end{align}
where $F=\left(\left(1+\sqrt{\frac{n_1n_K}{N}}\right)+\left(\sqrt{\log K}+\frac{\alpha(K-1)}{c_0\sigma}\right)\sqrt{\frac{n_K}{N}}\right)^2$ is a factor that mildly depends on the
 dimensionalities and the constant $\alpha$ in \eqref{eq:assumption_Sinf}.
\end{theorem}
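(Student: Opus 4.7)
The plan is to reduce this to Theorem~\ref{thm:deterministic} by controlling $\norm{\tE}_{\underline{S_\infty/\infty}}$ under the Gaussian assumption. Recall that the dual norm of the latent Schatten 1-norm is the overlapped $S_\infty/\infty$ norm, so the required quantity is $\max_{k}\|\mE_{(k)}\|_{S_\infty}$, the maximum spectral norm across the $K$ unfoldings of $\tE$. Once we have a high-probability upper bound on this, we instantiate $\lambda$ so that $\lambda\ge 2\norm{\tE}_{\underline{S_\infty/\infty}}+\alpha(K-1)$, apply Theorem~\ref{thm:deterministic}, and then algebraically reorganize $\lambda^{2}$ to expose the claimed factor $F/n_K$.

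The first step is the concentration bound. The mode-$k$ unfolding $\mE_{(k)}$ is an $n_k\times N/n_k$ matrix with i.i.d.\ $\mathcal{N}(0,\sigma^2)$ entries, so by a standard Gaussian-matrix tail bound (e.g., Davidson--Szarek), with probability at least $1-2e^{-t^2/2}$,
\begin{align*}
\|\mE_{(k)}\|_{S_\infty}\le \sigma\Bigl(\sqrt{n_k}+\sqrt{N/n_k}+t\Bigr).
\end{align*}
Taking $t=c\sqrt{\log K}$ and a union bound across the $K$ modes, with high probability
\begin{align*}
\max_{k}\|\mE_{(k)}\|_{S_\infty}\le \sigma\Bigl(\max_{k}\bigl(\sqrt{n_k}+\sqrt{N/n_k}\bigr)+c\sqrt{\log K}\Bigr).
\end{align*}
Because $n_1\ge\cdots\ge n_K$, the map $x\mapsto \sqrt{x}+\sqrt{N/x}$ is maximized on the endpoints, so $\max_{k}(\sqrt{n_k}+\sqrt{N/n_k})\le \sqrt{n_1}+\sqrt{N/n_K}$. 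This yields
\begin{align*}
\norm{\tE}_{\underline{S_\infty/\infty}}\le \sigma\Bigl(\sqrt{N/n_K}+\sqrt{n_1}+c\sqrt{\log K}\Bigr)
\end{align*}
with high probability, which is precisely the form appearing in the prescribed $\lambda$.

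Second, I choose $c_0$ large enough that $\lambda=c_0\sigma(\sqrt{N/n_K}+\sqrt{n_1}+\sqrt{\log K})+\alpha(K-1)$ dominates $2\norm{\tE}_{\underline{S_\infty/\infty}}+\alpha(K-1)$ on the high-probability event above, so that Theorem~\ref{thm:deterministic} applies and gives $\sum_{k}\norm{\hat{\tW}^{(k)}-\tW^{\ast(k)}}_F^{2}\le c\lambda^{2}\sum_{k}\bar{r}_{k}$. The final step is the algebraic manipulation: factor $N/n_K$ out of $\lambda^{2}$,
\begin{align*}
\lambda^{2} &= \sigma^{2}\frac{N}{n_K}\Bigl(c_0\bigl(1+\sqrt{n_1n_K/N}\bigr)+\bigl(c_0\sqrt{\log K}+\alpha(K-1)/\sigma\bigr)\sqrt{n_K/N}\Bigr)^{2}\\
&= c_0^{2}\,\sigma^{2}\,\frac{N}{n_K}\,F,
\end{align*}
and divide by $N$ to obtain \eqref{eq:bound_gaussian}. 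The only delicate piece is the concentration argument and the observation that $\sqrt{x}+\sqrt{N/x}$ is monotone on the relevant ranges so its maximum over $n_1,\ldots,n_K$ is attained at $n_1$ and $n_K$; everything else is routine algebra and an application of Theorem~\ref{thm:deterministic}.
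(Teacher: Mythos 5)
Your proposal is correct and follows essentially the same route as the paper's own proof: a Gaussian operator-norm tail bound for each unfolding $\mE_{(k)}$, a union bound over the $K$ modes absorbed into the $\sqrt{\log K}$ term, verification that the prescribed $\lambda$ satisfies $\lambda\ge 2\norm{\tE}_{\underline{S_\infty/\infty}}+\alpha(K-1)$ with high probability for $c_0$ large enough, and then substitution of $\lambda$ into Theorem~\ref{thm:deterministic} with the factorization $\lambda^2=c_0^2\sigma^2(N/n_K)F$. The only cosmetic difference is that you justify $\max_k(\sqrt{n_k}+\sqrt{N/n_k})\le\sqrt{n_1}+\sqrt{N/n_K}$ via an endpoint argument, where the simpler term-by-term bounds $n_k\le n_1$ and $N/n_k\le N/n_K$ already suffice.
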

\begin{proof}
 The proof is presented in Appendix~\ref{sec:proof_gaussian}
\end{proof}
Note that the theoretically optimal choice of regularization constant $\lambda$
is independent of the Tucker/latent rank of the truth $\tW^{\ast}$,
which is unknown in practice.

Again we can obtain a bound corresponding to the minimum rank
singleton decomposition as in inequality~\eqref{eq:bound_main} as follows:
\begin{align}
\label{eq:bound_min_decomp}
 \frac{1}{N}\norm{\hat{\tW}-\tW^{\ast}}_F^2&\leq c_1 K
 F\sigma^2\frac{\min_k \tr_k}{n_K},
\end{align}
where $F$ is the same factor as in Theorem~\ref{thm:gaussian}.

\subsection{Comparison with the overlapped approach}
Inequality \eqref{eq:bound_min_decomp} explains the superior performance of the latent approach for
tensor decomposition in Figure~\ref{fig:overlap_vs_latent}. The inequality
obtained in \cite{TomSuzHayKas11} for the overlapped approach that uses
overlapped Schatten 1-norm~\eqref{eq:norm_overlap_S1} can be stated as follows:
\begin{align}
\label{eq:bound_overlap}
 \frac{1}{N}\norm{\hat{\tW}-\tW^{\ast}}_F^2&\leq c_1'\sigma^2\!\!\left(\frac{1}{K}\sum_{k=1}^{K}\sqrt{\textstyle\frac{1}{n_k}}\right)^2\!\!\left(\frac{1}{K}\sum_{k=1}^{K}\sqrt{\tr_k}\right)^2.
\end{align}
Comparing inequalities \eqref{eq:bound_min_decomp} and
\eqref{eq:bound_overlap}, we notice that the complexity of the
overlapped approach depends on the average (square root) of the Tucker rank
$\tr_1,\ldots,\tr_K$, whereas that of the latent approach only grows
linearly against the {\em minimum} Tucker rank. Interestingly, the latent approach
performs {\em as if it knows the mode with the minimum rank}, although
such information is not available to it. However in
inequality~\eqref{eq:bound_min_decomp} we have the factor $K$. This
means that if the mode with the minimum rank is known, the latent
approach looses by constant factor $K$ against the 
simple matrix decomposition approach that unfolds the given tensor at
the minimal rank mode and performs ordinary Schatten 1-norm minimization.


\subsection{Discussion on the identifiability}
\label{sec:identifiability}
Let $\bar{r}_k={\rm rank}(\mW^{(k)}_{(k)})$ be the mode-$k$ rank of the
 $k$th component $\tW^{(k)}$ in the decomposition
\begin{align}
\label{eq:model}
 \tW & = \tW^{(1)}+\tW^{(2)}+\cdots+\tW^{(K)}.
\end{align}
 We say that a decomposition \eqref{eq:model} is {\em locally 
identifiable} when there is no other decomposition
$\sum_{k=1}^{K}\tilde{\tW}^{(k)}$ having the same rank $(\bar{r}_1,\ldots,\bar{r}_K)$.
The following theorem fully characterizes the local identifiability of the decomposition~\eqref{eq:model}.

\begin{theorem}
\label{thm:identifiability}
 The decomposition~\eqref{eq:model} is {\em locally identifiable} if and
only if $\tW^{(k^{\ast})}=\tW$ for $k=k^{\ast}$ and $\tW^{(k)}=0$ otherwise, for some $k^{\ast}$.
\end{theorem}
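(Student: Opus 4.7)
The plan is to prove the biconditional by handling the two directions separately. The forward implication (singleton $\Rightarrow$ identifiable) is essentially bookkeeping, whereas the converse requires an explicit construction of an alternative decomposition, and this is where the real work lies.

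For the singleton direction, suppose $\tW^{(k^{\ast})}=\tW$ and $\tW^{(k)}=0$ for $k\neq k^{\ast}$, so the rank tuple satisfies $\bar{r}_k=0$ whenever $k\neq k^{\ast}$. Any competing decomposition $\sum_k \tilde{\tW}^{(k)}=\tW$ with the same rank tuple must have $\mathrm{rank}(\tilde{\mW}^{(k)}_{(k)})=0$, hence $\tilde{\tW}^{(k)}=0$, for every $k\neq k^{\ast}$; summing over $k$ then forces $\tilde{\tW}^{(k^{\ast})}=\tW=\tW^{(k^{\ast})}$, giving uniqueness.

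For the converse I argue the contrapositive: if $\tW^{(i)}\neq 0$ and $\tW^{(j)}\neq 0$ for some $i\neq j$, I construct a one-parameter family of alternative decompositions sharing the same rank tuple. Because $\bar{r}_i,\bar{r}_j\geq 1$, the column spaces of $\mW^{(i)}_{(i)}$ and $\mW^{(j)}_{(j)}$ are both nontrivial; pick nonzero vectors $\vu_i$ and $\vu_j$ inside them respectively, choose arbitrary nonzero $\vu_k\in\RR^{n_k}$ for $k\notin\{i,j\}$, and form the rank-one tensor $\tZ=\vu_1\otimes\cdots\otimes\vu_K$. For small $t\neq 0$, set $\tilde{\tW}^{(i)}=\tW^{(i)}+t\tZ$, $\tilde{\tW}^{(j)}=\tW^{(j)}-t\tZ$, and $\tilde{\tW}^{(k)}=\tW^{(k)}$ for the remaining $k$; the sum is visibly unchanged while $\tilde{\tW}^{(i)}\neq \tW^{(i)}$.

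The substantive check is that this perturbation preserves the rank tuple. The mode-$i$ unfolding of $\tZ$ is a rank-one matrix with column space spanned by $\vu_i$, which by construction sits inside the column space of $\mW^{(i)}_{(i)}$; consequently the column space of $\mW^{(i)}_{(i)}+t\mZ_{(i)}$ is contained in that of $\mW^{(i)}_{(i)}$, bounding the mode-$i$ rank above by $\bar{r}_i$. The same reasoning with $-t\mZ_{(j)}$ controls mode $j$, and the unchanged components contribute their original ranks. The main subtlety, which I expect to be the principal obstacle, is ensuring that the mode-$i$ and mode-$j$ ranks do not strictly drop below $\bar{r}_i$ or $\bar{r}_j$ after the perturbation; this is handled by lower semi-continuity of matrix rank, which guarantees that for all sufficiently small $t\neq 0$ the perturbed ranks remain exactly $\bar{r}_i$ and $\bar{r}_j$, completing the construction.
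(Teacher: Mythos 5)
Your proof is correct and follows essentially the same strategy as the paper: the singleton direction is the same rank-zero bookkeeping, and the converse is the same idea of shifting a perturbation lying in the column spaces of two nonzero components between them, your rank-one tensor $t\,\tZ$ being a special case of the paper's family $\tD^{(k,\ell)}\times_k\mU_k\times_\ell\mU_\ell$. If anything, your explicit small-$t$ argument via lower semicontinuity of rank is more careful than the paper, which asserts rank preservation for an arbitrary perturbation $\tD^{(k,\ell)}$ even though that only holds for generic or sufficiently small choices.
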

\begin{proof}
 The proof is given in Appendix~\ref{sec:proof_identifiability}.
\end{proof}
The above theorem partly explains the difficulty of estimating
individual components $\tW^{\ast(k)}$ {\em without additional incoherence
assumption} as in~\eqref{eq:assumption_Sinf}. In fact, most
decompositions of the form \eqref{eq:noise_model} are not identifiable. 


\section{Numerical results}
\label{sec:simulation}
In this section, we numerically confirm the scaling behavior we have
theoretically predicted in the last section.

The goal of this experiment is to recover the true low rank tensor
$\tW^{\ast}$ from a noisy observation $\tY$.
We randomly generated the true low rank tensors $\tW^{\ast}$ of size
$50\times 50\times 20$ or 
$80\times 80\times 40$ with various Tucker ranks
$(\tr_1,\tr_2,\tr_3)$. A low-rank tensor is generated by first randomly drawing the
$\tr_1\times\tr_2\times \tr_3$ core tensor from the standard normal
distribution and multiplying an orthogonal factor matrix drawn from the
Haar measure to its each mode. The observation tensor $\tY$ is obtained
by adding Gaussian noise with standard deviation $\sigma=0.1$. There is
no missing entries in this experiment. 

\begin{figure*}[tb]
 \begin{center}
  \includegraphics[width=\textwidth]{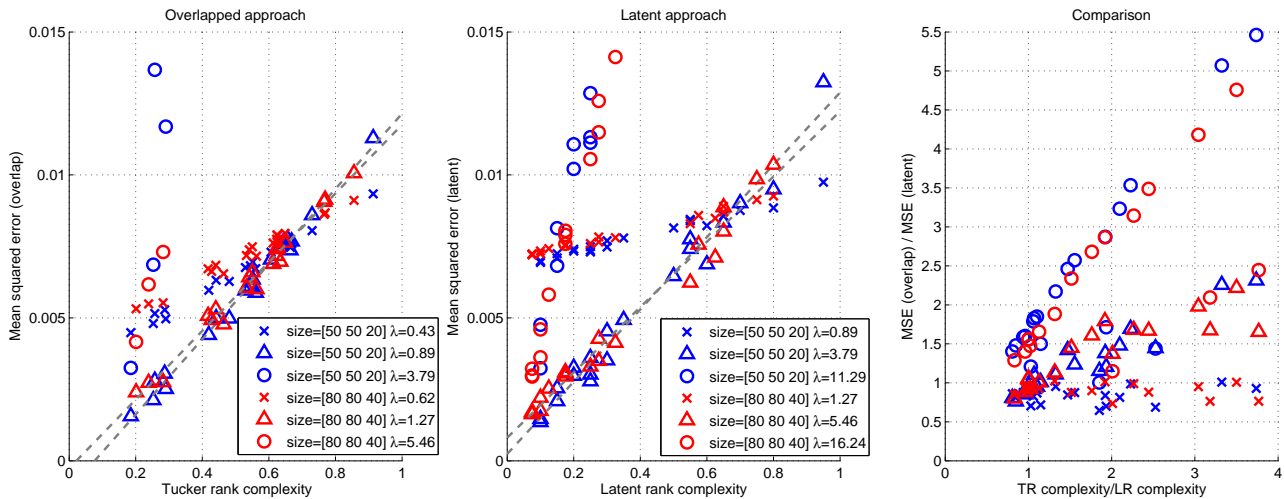}
  \caption{Performance of the overlapped approach and latent approach
  for tensor decomposition are shown against their theoretically
  predicted complexity measures (see Eqs.~\eqref{eq:trc} and
  \eqref{eq:lrc}). The right panel shows the improvement of the latent
  approach from the overlapped approach against the ratio of their
  complexity measures.}
  \label{fig:comparison}
 \end{center}
\end{figure*}

For an observation $\tY$, we computed tensor decompositions using
the overlapped approach and the latent
approach~\eqref{eq:opt} using the solver available from the webpage\footnote{\url{http://www.ibis.t.u-tokyo.ac.jp/RyotaTomioka/Softwares/Tensor}} of
one of the authors of \citet{TomHayKas11}. The solver uses the
alternating direction method of multipliers~\citep{GabMer76} and the algorithm is
described in the above paper. We computed the solutions
for 20 candidate regularization constants ranging from 0.1 to 100 and
report the results for three representative values for each method.

We measured the quality of the solutions obtained by the two approaches
by the mean squared error (MSE) $\norm{\hat{\tW}-\tW^{\ast}}_F^2/N$.  In
order to make our theoretical predictions more concrete, we
define the quantities in the right 
hand side of the bounds
\eqref{eq:bound_overlap} and  \eqref{eq:bound_gaussian}  as {\em Tucker
rank (TR) complexity} and  {\em Latent rank (LR) complexity}, respectively, as follows:
\begin{align}
\label{eq:trc}
\textrm{TR complexity}&=\left(\textstyle\frac{1}{K}\sum\nolimits_{k=1}^{K}\sqrt{\textstyle\frac{1}{n_k}}\right)^2\left(\textstyle\frac{1}{K}\sum\nolimits_{k=1}^{K}\sqrt{\tr_k}\right)^2,\\
\label{eq:lrc}
 \textrm{LR complexity}&=\frac{\sum_{k=1}^{K}\bar{r}_k}{n_K},
\end{align}
where without loss of generality we assume $n_1\geq\cdots\geq n_K$. We have
ignored terms like $\sqrt{n_k/N}$ because they are negligible for
$n_k\approx 50$ and $N\approx 50,000$. 
The TR complexity is equivalent
to the {\em normalized rank} in \cite{TomSuzHayKas11}.
Note that the TR complexity~\eqref{eq:trc} is defined in terms of
the Tucker rank $(\tr_1,\ldots,\tr_K)$ of the truth $\tW^{\ast}$,
whereas the LR complexity~\eqref{eq:lrc} is defined in terms of the
latent rank $(\lr_1,\ldots,\lr_K)$ (see \Secref{sec:detbound}). In order
to compute the sum of latent ranks $\sum_{k=1}^{K}\lr_k$, we ran the latent 
approach to the true tensor $\tW^{\ast}$ without noise, and took the minimum
of the sums obtained from that and the minimum rank singleton
decomposition. 
The whole procedure is repeated 10 times and averaged. 

Figure~\ref{fig:comparison} shows the results of the experiment. The
left panel shows the MSE of the overlapped approach
against the TR complexity~\eqref{eq:trc}. The middle panel shows the
MSE of the latent approach against the LR
complexity~\eqref{eq:lrc}. The right panel shows the improvement (i.e.,
MSE of the overlap approach divided by that of the latent approach)
against the ratio of the respective complexity measures.

First, from the left panel we can confirm that as predicted by
\cite{TomSuzHayKas11}, the MSE of the overlapped approach scales
linearly against the TR complexity~\eqref{eq:trc} for each value of the
regularization constant. We can also see that as predicted by
Theorem~\ref{thm:gaussian}, by scaling the
regularization constant proportionally with $\sqrt{N/n_K}$, the series
corresponding to size $50\times 50\times 20$ and those corresponding to size
$80\times 80\times 40$ almost lie on top of each others.

From the central panel, we can clearly see that the MSE of the
latent approach scales linearly against the LR complexity~\eqref{eq:lrc}
as predicted by Theorem~\ref{thm:gaussian}.
 The series with $\bigtriangleup$ ($\lambda=3.79$ for $50\times 50\times
 20$, $\lambda=5.46$ for $80\times 80 \times 40$) is
mostly below other series, which means that the optimal choice of the
regularization constant is independent of the rank of the true tensor
and only depends on the size; this agrees with the condition on
$\lambda$ in Theorem~\ref{thm:gaussian}. Since the blue series and red
series with the same markers lie on top of each other (especially 
the series with $\bigtriangleup$ for which the optimal regularization
constant is chosen),
we can see that our theory predicts not only the scaling against the latent
ranks but also that against the size of the tensor correctly.
Note that the regularization constants are scaled by roughly 1.6 to
account for the difference in the dimensionality.

The right panel reveals that in many cases the latent approach performs
better than the overlapped approach, i.e., MSE (overlap)/ MSE (latent)
greater than one. Moreover, we can see
that the success of the latent approach relative to the overlapped
approach is correlated with high TR complexity to LR complexity
ratio. Indeed, we found that the optimal decomposition of the true
tensor $\tW^{\ast}$ was typically a singleton decomposition
corresponding to the smallest tucker rank (see \Secref{sec:detbound}).

One might think that we can fix the overlapped approach by allowing
individual regularization constant for each mode. However, this would
only be possible if we knew the mode with small rank.

The improvements here are milder than that in
Figure~\ref{fig:overlap_vs_latent}. This is because most of the randomly
generated low-rank tensors were simultaneously low-rank to some
degree. It is interesting that the latent approach perform at
least as good as the overlapped approach also in such situations.

\section{Conclusion}
\label{sec:conclusion}
In this paper, we have presented a framework for structured Schatten
norms. The current framework includes both the overlapped Schatten
1-norm and latent Schatten 1-norm recently proposed in the context of
convex-optimization-based tensor
decomposition~\citep{SigDeLSuy10,GanRecYam11,LiuMusWonYe09,TomHayKas11}, and
connects these studies to the broader studies on
structured sparsity~\citep{BacJenMaiObo11,JenAudBac11,OboJacVer11,MauPon11}.
Moreover, we have shown a {\em duality} that holds between the two types
of norms.

Furthermore, we have rigorously studied the performance of the latent approach for tensor
decomposition. We have shown the consistency of the latent Schatten
1-norm minimization. Next, we have analyzed the denoising performance of the
latent approach and shown that the error of the latent
approach is upper bounded by the {\em minimum} Tucker rank, which
contrasts sharply against the average (square root) dependency of the
overlapped approach analyzed in \citet{TomSuzHayKas11}. This explains the
empirically observed superior performance of 
the latent approach compared to the overlapped approach.
 The most difficult case for the overlapped approach is when the unknown
 tensor is only low-rank in one mode as in Figure~\ref{fig:overlap_vs_latent}.

We have also confirmed through numerical simulations that our analysis precisely
predicts the scaling of the mean squared error as a function of the
dimensionalities and the latent rank of the unknown tensor. Unlike 
Tucker rank, latent rank of a tensor is not easy to
compute. However, note that the theoretically optimal scaling of the
regularization constant does not depend on the latent rank.

Therefore we have theoretically and empirically shown
that for noisy tensor decomposition, the latent approach is more likely to
perform better than the overlapped approach. Analyzing the performance
of the latent approach for tensor completion would be an important
future work. 

The structured Schatten norms proposed in this paper include norms for
tensors that are not employed in practice yet. Therefore, we envision
that this paper serve as a starting point for various extensions, e.g.,
using the overlapped $S_1/\infty$-norm instead of the $S_1/1$-norm 
or a {\em non-sparse} tensor decomposition similar to the $\ell_p$-norm
MKL~\cite{MicPon05,KloBreSonZie11}.

{\small \bibliography{icml2013}}

\clearpage
\appendix
\noindent{\bf\Large Supplementary material for  ``Convex Tensor Decomposition via Structured Schatten Norms''}

\section{Proof of Lemma~\ref{lem:dualnorm}}
\label{sec:proof_lemma_dualnorm}
\begin{proof}
From the definition, the dual norm
 $\norm{\tX}_{(\underline{S_p/q})^{\ast}}$ can be written as follows:
\begin{align*}
\norm{\tW}_{(\underline{S_p/q})^{\ast}}=\sup \dot{\tW}{\tX}\quad {\rm
 s.t.}\quad \norm{\tW}_{\underline{S_p/q}}\leq 1.
\end{align*}
The basic strategy of the proof is to rewrite the above maximization
problem as a constraint optimization problem and derive the dual
 problem.

First, we rewrite the above maximization problem as follows:
\begin{align*}
\norm{\tX}_{(\underline{S_p/q})^{\ast}}&=\sup
 \frac{1}{K}\sum_{k=1}^{K}\dot{\mZ_k}{\mX_{(k)}} \\
&{\rm s.t.}\quad \mZ_k=\mW_{(k)}, \sum_{k=1}^{K}\|\mZ_k\|_{S_p}^q\leq 1,
\end{align*}
where $\mZ_k\in\RR^{n_k\times N/n_k}$ ($k=1,\ldots,K$) are auxiliary
 variables.

Next we write down the Lagrangian as follows:
\begin{align*}
 L&=\frac{1}{K}\sum_{k}\dot{\mZ_k}{\mX_{(k)}}\\
&\qquad+\frac{1}{K}\sum_{k}\dot{\tilde{\tY}^{(k)}}{\mZ_k^{(k)}-\mW}\\
&\qquad+\frac{\gamma}{Kq}\left(1-\sum\nolimits_{k}\|\mZ_k\|_{S_p}^q\right),
\end{align*}
where $\tilde{\tY}^{(k)}\in\RR^{n_1\times\cdots\times n_K}$
 ($k=1,\ldots,K$), and $\gamma\geq 0$ are Lagrangian multipliers.

Note that for $\mX,\mZ\in\RR^{R\times C}$, we have
\begin{align*}
& \sup_{\mZ}\left(\dot{\mX}{\mZ}-\frac{\gamma}{q}\|\mZ\|_{S_p}^q\right)\\
&\leq\gamma\sup_{\mZ}\left(\|\mX/\gamma\|_{S_{p^\ast}}\|\mZ\|_{S_p}-\frac{1}{q}\|\mZ\|_{S_p}^q\right)\\
&\leq\frac{\gamma^{1-q^\ast}}{q^{\ast}}\|\mX\|_{S_{p^\ast}}^{q^\ast}.
\end{align*}
Here the first equality is achieved if we take $\mZ=c\mU{\rm
 diag}(\sigma_1^{p^\ast/p},\ldots,\sigma_r^{p^\ast/p})\mV\T$, where
$\mU{\rm diag}(\sigma_1,\ldots,\sigma_r)\mV\T$ is the
 singular value decomposition of the matrix $\mX/\gamma$,  and $c$ is an
 arbitrary scaling constant. 
 The second equality is achieved if we take $\|\mZ\|_{S_p}=\|\mX/\gamma\|_{S_{p^\ast}}^{\frac{1}{q-1}}$.

Thus, maximizing the Lagrangian with respect to $\mZ_k$ ($k=1,\ldots,K$)
 and $\tW$, we obtain the dual problem
\begin{align*}
 \norm{\tX}_{\left(S_1/q\right)^{\ast}}&=\!\!\!\!\inf_{\gamma,\tY^{(1)},\ldots,\tY^{(K)}}\!\!\left(\frac{\gamma^{1-q^{\ast}}}{K^{1-q^{\ast}}q^{\ast}}\sum_{k=1}^{K}\|\mY_{(k)}^{(k)}\|_{S_{p^\ast}}^{q^\ast}+\frac{\gamma}{Kq}\right)\\
{\rm s.t.}\quad&\tY^{(1)}+\cdots+\tY^{(K)}=\tX,
\end{align*}
where we used the change of  variable $(\tX+\tilde{\tY}^{(k)})/K=:\tY^{(k)}$.
Furthermore, by explicitly minimizing over $\gamma$, we have
 $\gamma/K=(\sum_{k=1}^{K}\|\mY_{(k)}^{(k)}\|_{S_{p^\ast}}^{q^\ast})^{1/q^{\ast}}$
 and we obtain the statement of the lemma.
\end{proof}


\section{Proof of Theorem~\ref{thm:deterministic}}
\label{sec:proof_deterministic}
Let $\hat{\tW}=\sum_{k=1}^{K}\hat{\tW}^{(k)}$ be the solution and its
optimal decomposition of the minimization problem~\eqref{eq:opt}; in
addition let $\tDelta^{(k)}:=\hat{\tW}^{(k)}-\tW^{\ast(k)}$.

The proof is based on Lemmas \ref{lem:decompose} and
\ref{lem:lowerbound}, which we present below.

In order to present the first lemma, we need the following definitions.
 Let  $\mU_k\mS_k\mV_k=\mW^{\ast(k)}_{(k)}$ be the singular value
 decomposition of the mode-$k$ unfolding of the $k$th component of the unknown tensor $\tW^{\ast}$. We define
 the orthogonal projection of $\Delta^{(k)}$ as follows:
\begin{align*}
 \mDelta^{(k)}_{(k)}&=\mDelta_k' + \mDelta_k'',
\intertext{where}
\mDelta_k'' &= (\mI_{n_k}-\mU_k\mU_k\T)\mDelta_{(k)}^{(k)}(\mI_{N/n_k}-\mV_k\mV_k\T).
\end{align*}
Intuitively speaking, $\mDelta_k''$ lies in a subspace completely
orthogonal to the unfolding of the $k$th component
$\mW^{\ast(k)}_{(k)}$, whereas $\mDelta_k'$ lies in a  partially correlated subspace.

The following lemma is similar to
\citet[Lemma 1]{NegRavWaiYu09} and \citet[Lemma 2]{TomSuzHayKas11}, and
it bounds the Schatten 1-norm of the orthogonal part $\mDelta_k''$ with
that of the partially correlated part $\mDelta_k'$ and also bounds the rank of $\mDelta_k'$ .
\begin{lemma}
 \label{lem:decompose}
Let $\hat{\tW}$ be the solution of the minimization
 problem~\eqref{eq:opt} with the regularization constant $\lambda\geq 2\norm{\tE}_{\underline{S_\infty/\infty}}$.
Let $\tDelta^{(k)}$ and its decomposition be as defined above. Then we have
\begin{enumerate}
 \item ${\rm rank}(\mDelta_k')\leq 2\bar{r}_k$.
 \item $\sum_{k=1}^{K}\|\mDelta_k''\|_{S_1}\leq 3\sum_{k=1}^{K}\|\mDelta_k'\|_{S_1}$.
\end{enumerate}
\end{lemma}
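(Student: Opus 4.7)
The plan is to prove the two claims separately. The first is a direct rank-counting argument, and the second is the standard ``basic inequality'' of nuclear-norm analysis, generalized to the latent decomposition setting via the duality in Lemma~\ref{lem:dualnorm}.

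For the rank bound, I would simply expand
\[
\mDelta_k' = \mDelta^{(k)}_{(k)} - \mDelta_k'' = \mU_k\mU_k\T\mDelta^{(k)}_{(k)} + (\mI_{n_k} - \mU_k\mU_k\T)\mDelta^{(k)}_{(k)}\mV_k\mV_k\T.
\]
The first summand has column space contained in $\mathrm{range}(\mU_k)$, hence rank at most $\bar{r}_k$; the second summand has row space contained in $\mathrm{range}(\mV_k)$, again rank at most $\bar{r}_k$. Subadditivity of rank then yields ${\rm rank}(\mDelta_k') \leq 2\bar{r}_k$.

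For the Schatten-1 bound, I would start from the basic inequality obtained by comparing the objective value of~\eqref{eq:opt} at the estimator's decomposition $\{\hat{\tW}^{(k)}\}$ against that at the ground-truth decomposition $\{\tW^{*(k)}\}$, which is feasible by assumption~\eqref{eq:assumption_Sinf}. Substituting $\tY=\tW^{\ast}+\tE$ and discarding the nonnegative quadratic term $\tfrac{1}{2}\norm{\sum_k\tDelta^{(k)}}_F^2$ collapses the quadratic parts to the cross term, leaving
\[
\lambda \sum_{k=1}^K \!\left(\|\hat{\mW}^{(k)}_{(k)}\|_{S_1} - \|\mW^{*(k)}_{(k)}\|_{S_1}\right) \leq \Big\langle \sum_{k=1}^K \tDelta^{(k)}, \tE \Big\rangle.
\]
I would bound the right-hand side by the H\"older-type inequality inherited from Lemma~\ref{lem:dualnorm},
\[
\Big\langle \sum_k \tDelta^{(k)}, \tE \Big\rangle = \sum_k \langle \mDelta^{(k)}_{(k)}, \mE_{(k)} \rangle \leq \norm{\tE}_{\underline{S_\infty/\infty}} \sum_k \|\mDelta^{(k)}_{(k)}\|_{S_1},
\]
and further use $\|\mDelta^{(k)}_{(k)}\|_{S_1}\leq \|\mDelta_k'\|_{S_1}+\|\mDelta_k''\|_{S_1}$. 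For the left-hand side, I would invoke decomposability of the nuclear norm: because $\mDelta_k''$ was constructed to have both column and row spaces orthogonal to those of $\mW^{*(k)}_{(k)}$, one has $\|\mW^{*(k)}_{(k)}+\mDelta_k''\|_{S_1}=\|\mW^{*(k)}_{(k)}\|_{S_1}+\|\mDelta_k''\|_{S_1}$, so the triangle inequality gives $\|\hat{\mW}^{(k)}_{(k)}\|_{S_1}-\|\mW^{*(k)}_{(k)}\|_{S_1}\geq \|\mDelta_k''\|_{S_1}-\|\mDelta_k'\|_{S_1}$. Plugging the two sides into the basic inequality and using $\lambda\geq 2\norm{\tE}_{\underline{S_\infty/\infty}}$ to absorb the noise term reduces everything to $\tfrac{1}{2}\sum_k\|\mDelta_k''\|_{S_1}\leq \tfrac{3}{2}\sum_k\|\mDelta_k'\|_{S_1}$ after a one-line rearrangement.

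The main obstacle is bookkeeping rather than estimation. The split $\mDelta^{(k)}_{(k)} = \mDelta_k'+\mDelta_k''$ is defined via the SVD of the ground-truth component $\mW^{*(k)}_{(k)}$, whereas the comparison is with the estimator's (generally distinct) optimal latent decomposition $\hat{\mW}^{(k)}_{(k)}$; it is crucial that the orthogonality used in the decomposability step is tied to $\mW^{*(k)}_{(k)}$ rather than to $\hat{\mW}^{(k)}_{(k)}$, otherwise the equality $\|\mW^{*(k)}_{(k)}+\mDelta_k''\|_{S_1}=\|\mW^{*(k)}_{(k)}\|_{S_1}+\|\mDelta_k''\|_{S_1}$ fails. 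Once this alignment is fixed, the argument is a direct multi-mode generalization of the standard single-matrix analogue, with Lemma~\ref{lem:dualnorm} substituting for the usual spectral/nuclear H\"older inequality.
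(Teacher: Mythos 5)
Your argument is correct and is essentially the proof the paper intends: the paper omits the details and states only that the proof ``closely follows'' Lemma~2 of \citet{TomSuzHayKas11}, which is exactly the standard decomposability argument you give — the rank bound via the two projections $\mU_k\mU_k\T\mDelta^{(k)}_{(k)}$ and $(\mI-\mU_k\mU_k\T)\mDelta^{(k)}_{(k)}\mV_k\mV_k\T$, plus the basic inequality at the feasible truth decomposition, H\"older via the overlapped $S_\infty/\infty$ dual norm, decomposability of $\|\cdot\|_{S_1}$ for $\mDelta_k''$ against $\mW^{\ast(k)}_{(k)}$, and $\lambda\geq 2\norm{\tE}_{\underline{S_\infty/\infty}}$ to absorb the noise term. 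Your remark that the split must be tied to the SVD of $\mW^{\ast(k)}_{(k)}$ (the only change from the cited lemma being that $\bar r_k$ is the mode-$k$ rank of the $k$th component rather than the Tucker rank) is precisely the point the paper flags.
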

Note that although the proof of the above statement  closely follows that of
\citet[Lemma 2]{TomSuzHayKas11}, the notion of rank is
different. In their result, the rank is the Tucker rank $\tr_k$, whereas
the rank here is the mode-$k$ rank of the $k$th component
$\tW^{\ast(k)}$ of the truth.

The following lemma relates the squared Frobenius norm of the difference
of the sums $\norm{\sum_{k=1}^{K}\tDelta^{(k)}}_F^2$ with the sum of squared
differences $\sum_{k=1}^{K}\norm{\tDelta^{(k)}}_F^2$
\begin{lemma}
 \label{lem:lowerbound}
Let $\hat{\tW}$ be the solution of the minimization
 problem~\eqref{eq:opt}. Then we have,
\begin{align*}
\frac{1}{2} \sum_{k=1}^{K}\norm{\tDelta^{(k)}}_F^2\leq
\frac{1}{2} \norm{\tDelta}_F^2+\alpha (K-1)\sum_{k=1}^{K}\|\mDelta_{(k)}^{(k)}\|_{S_1},
\end{align*}
where $\tDelta=\sum_{k=1}^{K}\tDelta^{(k)}$.
\end{lemma}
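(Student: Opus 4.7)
The plan is to prove the bound by expanding $\norm{\tDelta}_F^2 = \norm{\sum_{k=1}^{K}\tDelta^{(k)}}_F^2$ into its diagonal and off-diagonal pieces and then controlling the cross terms using the spectral-norm bound that each $\hat{\tW}^{(l)}$ satisfies by feasibility in the constrained problem~\eqref{eq:opt}, together with the matching assumption~\eqref{eq:assumption_Sinf} on $\tW^{\ast(l)}$. By bilinearity of the Frobenius inner product,
\begin{align*}
\norm{\tDelta}_F^2 = \sum_{k=1}^{K}\norm{\tDelta^{(k)}}_F^2 + \sum_{k\neq l}\dot{\tDelta^{(k)}}{\tDelta^{(l)}},
\end{align*}
so after rearrangement the claimed inequality is equivalent to the bound
\begin{align*}
-\sum_{k\neq l}\dot{\tDelta^{(k)}}{\tDelta^{(l)}}\leq 2\alpha(K-1)\sum_{k=1}^{K}\|\mDelta_{(k)}^{(k)}\|_{S_1}.
\end{align*}

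For each ordered pair $(k,l)$ with $k\neq l$, I would rewrite the tensor inner product in the mode-$k$ unfolding, $\dot{\tDelta^{(k)}}{\tDelta^{(l)}} = \dot{\mDelta_{(k)}^{(k)}}{\mDelta_{(k)}^{(l)}}$, and apply the matrix H\"older inequality (the duality between the Schatten $1$-norm and the spectral norm) with the $S_1$ factor attached to the diagonal unfolding:
\begin{align*}
|\dot{\tDelta^{(k)}}{\tDelta^{(l)}}| \leq \|\mDelta_{(k)}^{(k)}\|_{S_1}\|\mDelta_{(k)}^{(l)}\|_{S_\infty}.
\end{align*}

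The key step is bounding $\|\mDelta_{(k)}^{(l)}\|_{S_\infty}$ for $l\neq k$. By the triangle inequality,
\begin{align*}
\|\mDelta_{(k)}^{(l)}\|_{S_\infty}\leq \|\hat{\mW}_{(k)}^{(l)}\|_{S_\infty}+\|\mW_{(k)}^{\ast(l)}\|_{S_\infty}\leq 2\alpha,
\end{align*}
where the first summand is at most $\alpha$ by the feasibility constraint in~\eqref{eq:opt} and the second is at most $\alpha$ by~\eqref{eq:assumption_Sinf}. For each fixed $k$ there are $K-1$ values of $l\neq k$, so summing gives
\begin{align*}
\sum_{k\neq l}|\dot{\tDelta^{(k)}}{\tDelta^{(l)}}|\leq 2\alpha(K-1)\sum_{k=1}^{K}\|\mDelta_{(k)}^{(k)}\|_{S_1},
\end{align*}
and dividing by two yields the lemma. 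The real content — and the main subtlety — is that without the spectral-norm constraint in~\eqref{eq:opt} and the matching assumption~\eqref{eq:assumption_Sinf} on the truth, the cross terms could be arbitrarily negative and the gap between $\sum_k\norm{\tDelta^{(k)}}_F^2$ and $\norm{\tDelta}_F^2$ would be uncontrolled; the incoherence conditions convert into a uniform $2\alpha$ spectral-norm bound on the off-diagonal unfoldings, and one must orient H\"older so that the $S_1$ factor lands on $\mDelta_{(k)}^{(k)}$ to match the right-hand side of the claim.
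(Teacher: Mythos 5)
Your proof is correct and is essentially the intended argument: expand $\norm{\tDelta}_F^2$ bilinearly, unfold each cross term along mode $k$, and apply the Schatten $1$--$\infty$ H\"older duality, with the feasibility constraint in~\eqref{eq:opt} and assumption~\eqref{eq:assumption_Sinf} giving the uniform $2\alpha$ bound on $\|\mDelta^{(l)}_{(k)}\|_{S_\infty}$ for $l\neq k$; the constants work out exactly to $\alpha(K-1)$ after the factor $\tfrac{1}{2}$. (The paper states this lemma without an explicit proof, but your derivation matches the argument it relies on, including the correct orientation of H\"older so that the $S_1$ factor falls on $\mDelta^{(k)}_{(k)}$.)
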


\begin{proof}[Proof of Theorem~\ref{thm:deterministic}.]
First from the optimality of $\hat{\tW}$, we have
\begin{align*}
 \frac{1}{2}\norm{\tY-\hat{\tW}}_F^2&+\lambda\sum\nolimits_{k=1}^{K}\|\hat{\mW}^{(k)}_{(k)}\|_{S_1}\\
&\leq\frac{1}{2}\norm{\tY-\tW^{\ast}}_F^2+\lambda\sum\nolimits_{k=1}^{K}\|\mW^{\ast(k)}_{(k)}\|_{S_1},
\end{align*}
which implies
\begin{align}
\frac{1}{2}\norm{\tDelta}_F^2&\leq
 \dot{\tDelta}{\tE}+\lambda\sum\nolimits_{k=1}^{K}\|\mDelta_{(k)}^{(k)}\|_{S_1}\nonumber\\
\label{eq:proof_deterministic_1}
&\leq(\norm{\tE}_{\underline{S_\infty/\infty}}+\lambda)\sum\nolimits_{k=1}^{K}\|\mDelta_{(k)}^{(k)}\|_{S_1},
\end{align}
where we used the fact that $\tY=\tW^{\ast}+\tE$ and the triangular
 inequality in the first line, and H\"older's inequality in the second
 line. Note that there is an additional looseness in the second line due
 to the fact that $\tDelta=\sum_{k=1}^{K}\tDelta^{(k)}$ is not the
 optimal decomposition of $\tDelta$ induced by the latent Schatten 1-norm.

Next, combining inequality \eqref{eq:proof_deterministic_1} with
 Lemma~\ref{lem:lowerbound}, we have
\begin{align}
\label{eq:proof_deterministic_2}
\frac{1}{2}\sum\nolimits_{k=1}^{K}\norm{\tDelta^{(k)}}_F^2 &\leq2\lambda
\sum\nolimits_{k=1}^{K}\|\mDelta_{(k)}^{(k)}\|_{S_1},
\end{align}
where we used the fact that $\lambda\geq \norm{\tE}_{\underline{S_\infty/\infty}}\!\!\!+\alpha(K-1)$.

Finally combining inequality \eqref{eq:proof_deterministic_2} with
 Lemma~\ref{lem:decompose}, we obtain
\begin{align*}
\frac{1}{2}\sum\nolimits_{k=1}^{K}\norm{\tDelta^{(k)}}_F^2 &\leq
2\lambda\sum\nolimits_{k=1}^{K}(\|\mDelta_k'\|_{S_1}+\|\mDelta_k''\|_{S_1})\\
&\leq 8\lambda\sum\nolimits_{k=1}^{K}\|\mDelta_k'\|_{S_1}\\
&\leq 8\lambda\sum\nolimits_{k=1}^{K}\sqrt{2\bar{r}_k}\|\mDelta_k'\|_{F}\\
&\leq 8\lambda\sum\nolimits_{k=1}^{K}\sqrt{2\bar{r}_k}\norm{\tDelta^{(k)}}_{F}\\
&\leq 8\sqrt{2}\lambda\sqrt{\sum\nolimits_{k=1}^{K}\bar{r}_k}\sqrt{\sum\nolimits_{k=1}^{K}\norm{\tDelta^{(k)}}_F^2},
\end{align*}
where we used Lemma~\ref{lem:decompose} in the second line, H\"older's
 inequality in the third line (combined with Lemma~\ref{lem:decompose}),
the fact that $\mDelta^{(k)}_{(k)}=\mDelta_k'+\mDelta_k''$ is an
 orthogonal decomposition in the fourth line, and Cauchy-Schwarz inequality
 in the fifth line.  Dividing both sides of the last inequality by
 $\sqrt{\sum_{k=1}^{K}\norm{\tW^{(k)}}_F^2}$, we obtain our claim.
\end{proof}

\section{Proof of Theorem~\ref{thm:gaussian}}
\label{sec:proof_gaussian}
\begin{proof}
Since each entry of $\tE$ is an independent zero men Gaussian random variable
 with variance $\sigma^2$, for each mode $k$ we have the following 
tail bound (Corollary 5.35 in \cite{Ver10})
\begin{align*}
 P\left(\|\mE_{(k)}\|_{S_\infty}> \sigma\left(\sqrt{N/n_k}+\sqrt{n_k}\right)+t\right)&\leq \exp\left(-t^2/(2\sigma^2)\right).
\end{align*}
Next, taking a union bound
\begin{align*}
& P\left(\max_k\|\mE_{(k)}\|_{S_\infty}>\sigma\max_k\left(\sqrt{N/n_k}+\sqrt{n_k}\right)+t\right)\\
&\qquad\leq K\exp\left(-t^2/(2\sigma^2)\right).
\end{align*}
Substituting $t\leftarrow t+\sigma\sqrt{\log K}$, we have
\begin{align*}
& P\left(\norm{\tE}_{\underline{S_\infty/\infty}}\geq\sigma\max_k\left(\sqrt{N/n_k}+\sqrt{n_k}\right)+\sigma\sqrt{\log
 K}+t\right)\\
&\qquad\leq \exp\left(-\frac{t^2+2\sigma\sqrt{\log K}t}{2\sigma^2}\right)\\
&\qquad\leq\exp\left(-t^2/(2\sigma^2)\right)
\end{align*}
Therefore if $c_0>2$,
\begin{align*}
\lambda&= c_0\sigma\left(\sqrt{N/n_K}+\sqrt{n_1}+\sqrt{\log
 K}\right)+\alpha(K-1)\\
&\geq 2\norm{\tE}_{\underline{S_\infty/\infty}}+\alpha(K-1)
\end{align*}
with probability at least
 $1-\exp\left(-\frac{(c_0-2)^2}{2}(N/n_K)\right)$, which satisfies the
 condition of Theorem~\ref{thm:deterministic}. Substituting the above
 $\lambda$ into the right hand side of the error bound in Theorem~\ref{thm:deterministic} we have the statement of Theorem~\ref{thm:gaussian}.
\end{proof}

\section{Proof of Theorem~\ref{thm:identifiability}}
\label{sec:proof_identifiability}
\begin{proof}
We first prove the ``if'' direction. 
 suppose that there is another decomposition
\begin{align*}
 \sum\nolimits_{k=1}^{K}\tW^{(k)}&=\sum\nolimits_{k=1}^{K}\tilde{\tW}^{(k)},
\end{align*}
such that ${\rm rank}(\mW^{(k)}_{(k)})={\rm
 rank}(\tilde{\mW}^{(k)}_{(k)})$. Note that $\tW\neq\tilde{\tW}$ can
 happen only when $\tW^{(k)}\neq 0$ (otherwise the rank would increase). Also note that $\tW\neq
 \tilde{\tW}$ should happen for at least two $k$'s. Combining these we
 conclude that there are $k\neq \ell$ such that $\tW^{(k)}\neq 0$ and
 $\tW^{(\ell)}\neq 0$.

Conversely, suppose that there are $k\neq
 \ell$ such that $\tW^{(k)}\neq 0$ and $\tW^{(\ell)}\neq 0$, we can
 write\footnote{Here the tensor mode-$k$ product $\tA=\tB\times_k \mC$
 is defined as $a_{i_1\ldots
 i_K}=\sum_{\ell=1}^{d_k}b_{i_1i_2\ldots\ell\ldots i_K}c_{\ell i_k}$
 where $\tA=(a_{i_1\ldots i_K})\in\RR^{n_1\times\cdots\times n_K}$, 
$\tB=(b_{i_1\ldots \ell\ldots i_K})\in\RR^{n_1\times\cdots \times
 d_k\times\cdots\times n_K}$, and $\mC=(c_{\ell i_k})\in\RR^{d_k\times n_k}$}
\begin{align*}
 \tW^{(k)}&=\tC^{(k)}\times_k \mU_k,\\
 \tW^{(\ell)}&=\tC^{(\ell)}\times_{\ell}\mU_{\ell},
\end{align*}
where $\mU_k\in\RR^{n_k\times \bar{r}_k}$, $\tC^{(k)}\in\RR^{n_1\times
 \cdots\times n_{k-1}\times\bar{r}_k\times\cdots \times n_K}$, and
 $\mU_{\ell}$ and $\tC^{(\ell)}$ are defined similarly. Since
 $\tC^{(k)}$ and $\tC^{(\ell)}$ are allowed to be full rank, we can
 define
\begin{align*}
 \tilde{\tC}^{(k)}&=\tC^{(k)} + \tD^{(k,\ell)}\times_{\ell}\mU_{\ell},\\
 \tilde{\tC}^{(\ell)}&=\tC^{(\ell)}-\tD^{(k,\ell)}\times_{k}\mU_{k},
\end{align*}
for any
 $\tD\in\RR^{n_1\times\cdots\times\bar{r}_k\times\cdots\times\bar{r}_{\ell}\times\cdots\times
 n_K}$. Then we have
\begin{align*}
 \tW^{(k)}+\tW^{(\ell)}&=\tC^{(k)}\times_k
 \mU_k+\tC^{(\ell)}\times_{\ell} \mU_{\ell}\\
&=\left(\tC^{(k)} +
 \tD^{(k,\ell)}\times_{\ell}\mU_{\ell}\right)\times_k\mU_k\\
&\qquad +\left(\tC^{(\ell)}-\tD^{(k,\ell)}\times_{k}\mU_{k}\right)\times_{\ell}\mU_{\ell}\\
&=\tilde{\tC}^{(k)}\times_k\mU_k +
 \tilde{\tC}^{(\ell)}\times_{\ell}\mU_{\ell}\\
&=\tilde{\tW}^{(k)}+\tilde{\tW}^{(\ell)}.
\end{align*}
Note that ${\rm rank}(\tilde{\mW}^{(k')}_{(k')})=\bar{r}_{k'}$ for
$k'=k,\ell$. Therefore, there are infinitely many decompositions that
have the same rank $(\bar{r}_1,\ldots,\bar{r}_K)$.


\end{proof}

\end{document}